\def\eqref#1{equation~\ref{#1}}
\def\1{\bm{1}}
\DeclareMathAlphabet{\mathsfit}{\encodingdefault}{\sfdefault}{m}{sl}
\SetMathAlphabet{\mathsfit}{bold}{\encodingdefault}{\sfdefault}{bx}{n}
\DeclareMathOperator*{\argmax}{arg\,max}
\DeclareMathOperator*{\argmin}{arg\,min}
\theoremstyle{plain}
\newtheorem{theorem}{Theorem}
\newtheorem{lemma}[theorem]{Lemma}
\newtheorem{proposition}[theorem]{Proposition}
\newtheorem{corollary}[theorem]{Corollary}
\theoremstyle{definition}
\newtheorem{definition}[theorem]{Definition}
\theoremstyle{remark}
\newcommand{\CC}{\mathbb{C}}
\newcommand{\EE}{\mathbb{E}}
\newcommand{\RR}{\mathbb{R}}
\newcommand{\cA}{\mathcal{A}}
\newcommand{\cD}{\mathcal{D}}
\newcommand{\cF}{\mathcal{F}}
\newcommand{\cH}{\mathcal{H}}
\newcommand{\cL}{\mathcal{L}}
\newcommand{\cN}{\mathcal{N}}
\newcommand{\cO}{\mathcal{O}}
\newcommand{\cR}{\mathcal{R}}
\newcommand{\cX}{\mathcal{X}}
\newcommand{\cY}{\mathcal{Y}}
\newcommand{\fRe}{\mathfrak{Re}}
\DeclarePairedDelimiter\abs{\lvert}{\rvert}%
\DeclarePairedDelimiter\norm{\lVert}{\rVert}%
\DeclarePairedDelimiter\autoparens{(}{)}
\newcommand{\prs}[1]{\autoparens*{#1}}
\DeclarePairedDelimiter\autobrackets{[}{]}
\newcommand{\brs}[1]{\autobrackets*{#1}}
\DeclarePairedDelimiter\autocurlybrackets{\{}{\}}
\newcommand{\cbs}[1]{\autocurlybrackets*{#1}}
\DeclarePairedDelimiter\autoinnerproduct{\langle}{\rangle}
\newcommand{\innerprod}[1]{\autoinnerproduct*{#1}}
\let\oldabs\abs
\def\abs{\@ifstar{\oldabs}{\oldabs*}}
\let\oldnorm\norm
\def\norm{\@ifstar{\oldnorm}{\oldnorm*}}
\DeclareMathOperator*{\essinf}{ess\,inf}
\let\AND\relax
\definecolor{highlightcolor}{rgb}{0.7, 0.7, 0.7} 
\pgfplotsset{compat=1.17}
\title{Label Embedding via Low-Coherence Matrices}
\author{%
  Jianxin Zhang \\
  Electrical Engineering and Computer Science \\
  University of Michigan \\
  Ann Arbor, MI 48109 \\
  \texttt{jianxinz@umich.edu} \\
  \AND
  Clayton Scott \\
  Electrical Engineering and Computer Science \\
  University of Michigan \\
  Ann Arbor, MI 48109 \\
  \texttt{clayscot@umich.edu} \\
}
\begin{document}

\maketitle

\begin{abstract}
    Label embedding is a framework for multiclass classification problems where each label is represented by a distinct vector of some fixed dimension, and training involves matching model output to the vector representing the correct label. While label embedding has been successfully applied in extreme classification and zero-shot learning, and offers both computational and statistical advantages, its theoretical foundations remain poorly understood. This work presents an analysis of label embedding in the context of extreme multiclass classification, where the number of classes $C$ is very large. We present an excess risk bound that reveals a trade-off between computational and statistical efficiency, quantified via the coherence of the embedding matrix. We further show that under the Massart noise condition, the statistical penalty for label embedding vanishes with sufficiently low coherence. Our analysis supports an algorithm that is simple, scalable, and easily parallelizable, and experimental results demonstrate its superiority in large-scale applications.
\end{abstract}

\section{Introduction}

In classification, the goal is to learn from feature-label pairs $\cbs{(x_i, y_i)}_{i=1}^N$ a classifier $h: \cX \rightarrow \cY$ that accurately maps a feature vector $x$ in the feature space $\cX$ to its label $y$ in the label space $\cY$. For implementation purposes, labels are typically represented using the one-hot encoding which, for a $C$-class classification problem, represents the $i$-th class by the $i$-th standard basis vector in $\mathbb{R}^C$. 

Unfortunately, the one-hot encoding of labels is limited in the context of \emph{extreme classification}, which refers to multiclass and multilabel classification problems involving thousands of classes or more \citep{wei2022survey}. Extreme classification has emerged as an essential research area in machine learning, owing to an increasing number of real-world applications involving massive numbers of classes, such as image recognition \citep{zhou2014leml}, natural language processing \citep{le2014distributed, precup17applang}, and recommendation systems \citep{bhatia2015sleec, chang19apprecom}. Classification methods based on the one-hot encoding often struggle to scale effectively in these scenarios due to the high computational cost and memory requirements associated with handling large label spaces. 

To overcome this challenge, the \emph{label embedding} framework represents each label by a vector of fixed dimension, typically much lower than $C$, and learns a function that maps a feature vector to the vector representing its label. At inference time, the label of a test data point is assigned to match the nearest label representative in the embedding space. 
By employing a lower-dimensional label space, label embedding overcomes the computational challenge posed by large label space. At the same time, there is a growing need for efficient and scalable algorithms that can tackle extreme classification problems without compromising on performance \citep{prabhu2014fastxml,prabhu2018parabel,deng2018dual}. Successful applications of label embedding to extreme classification include \citet{leml, Slice, bhatia2015sleec, breakTheGlassCeiling19byGuo, evron18wltls, hsu09cs}. Furthermore, \citet{RODRIGUEZ201821} argues that label embedding can accelerate the convergence rate and better capture latent relationships between categories. 


Despite its widespread use, the theoretical basis of label embedding has not been thoroughly explored. This paper presents a new excess risk bound that provides insight into how label embedding algorithms work, and then exploits this insight to identify a simple implementation of label embedding with excellent performance. Specifically, our bound establishes a trade-off between computational efficiency and classification accuracy, quantified in terms of the coherence of the label embedding matrix. Our theory applies to label-embedding algorithms described by Meta-Algorithm~\ref{algo:embedding}, including various types of embeddings: data-independent embeddings \citep{Weston2002,hsu09cs}, those anchored in auxiliary information \citep{Akata_2013_CVPR}, and embeddings co-trained with models \citep{Weston2010}. Furthermore, under the multiclass noise condition of \citet{massart06massart}, the statistical penalty associated with a positive matrix coherence, which results from reducing the dimension of the label space, disappears. 

Meta-Algorithm~\ref{algo:embedding} is not our contribution, but rather a unification of several existing approaches. However, by selecting a label embedding matrix to optimize the tradeoff described by our bound, namely, by choosing a low-coherence embedding matrix, we present instantiations of this meta-algorithm that outperform competing approaches across several benchmark datasets.

\section{Related work} \label{sec:litrev}



While label embedding has also been successfully applied to zero-shot learning \citep{wang2019survey, Akata_2013_CVPR}, we focus here on extreme classification, together with related theoretical contributions.


\subsection{Extreme classification}
Besides label embedding, existing methods for extreme multiclass classification can be grouped into three main categories: label hierarchy, one-vs-all methods, and other methods. 

\textbf{Label Embedding.} 
LEML \citep{leml} leverages a low-rank assumption on linear models and effectively constrains the output space of models to a low-dimensional space. SLEEC \citep{bhatia2015sleec} is a local embedding framework that preserves the distance between label vectors. \citet{breakTheGlassCeiling19byGuo} point out that low-dimensional embedding-based models could suffer from significant overfitting. Their theoretical insights inspire a novel regularization technique to alleviate overfitting in such models. WLSTS \citep{evron18wltls} is an extreme multiclass classification framework based on \textit{error correcting output coding}, which embeds labels with codes induced by graphs. \citet{hsu09cs} use column vectors from a matrix with the \textit{restricted isometry property} (RIP) to represent labels. Their analysis is primarily tailored to multilabel classification. 
They deduce bounds for the conditional $\ell_2$-error, which measures the squared $2$-norm difference between the prediction and the label vector --- a metric that is not a standard measure of classification error. In contrast, our work analyzes the standard classification error. 

\textbf{Label Hierarchy.} Numerous methods such as Parabel \citep{prabhu2018parabel}, Bonsai \citep{bonsai}, AttentionXML \citep{attentionxml}, lightXML \citep{jiang21lightxml}, XR-Transformer \citep{XR-Transformer}, X-Transformer \citep{X-Transformer}, XR-Linear \citep{XR-Linear}, and ELIAS \citep{ELIAS} partition the label spaces into clusters. This is typically achieved by performing $k$-means clustering on the feature space. The training process involves training a cluster-level model to assign a cluster to a feature vector, followed by training a label-level model to assign labels within the cluster. 

\textbf{One-vs-all methods.} One-vs-all (OVA) algorithms address extreme classification problems with $C$ labels by modeling them as $C$ independent binary classification problems. For each label, a classifier is trained to predict its presence. DiSMEC \citep{Babbar17dismec} introduces a large-scale distributed framework to train linear OVA models, albeit at an expensive computational cost. ProXML \citep{ProXML} mitigates the impact of data scarcity with adversarial perturbations. SLICE \citep{Slice} accelerates negative sampling based on a generative model approximation. PD-Sparse \citep{pdsparse} and PPD-Sparse \citep{yen17ppdsparse} propose optimization algorithms to exploit a sparsity assumption on labels and feature vectors.

\textbf{Other methods.} Beyond the above categories, DeepXML \citep{deepxml} uses a negative sampling procedure that shortlists $O(\log C)$ relevant labels during training and prediction. 
VM \citep{vw} constructs trees with $\cO(\log C)$ depth that have leaves with low label entropy. Based on the standard random forest training algorithm, FastXML \citep{prabhu2014fastxml} proposes to directly optimize the Discounted Cumulative Gain to reduce the training cost. AnnexML \citep{tagami17annexml} constructs a $k$-nearest neighbor graph of the label vectors and attempts to reproduce the graph structure in a lower-dimension feature space.

\subsection{Excess risk bounds}

Our theoretical contributions are expressed as excess risk bounds, which quantify how the excess risk associated to a surrogate loss relates to the excess risk for the 0-1 loss. Excess risk bounds for classification were developed by \citet{zhang04, Bartlett06, Steinwart2007HowTC} and subsequently developed and extended by several authors. 

\citet{Ramaswamy2012} shows that one needs to minimize a convex surrogate loss defined on at least a $C-1$ dimension space to achieve consistency for the standard $C$-class classification problem, $i.e.$, any convex surrogate loss function operating on a dimension less than $C-1$ inevitably suffers an irreducible error. Complementing this, \citet{Pires13} validates the consistency of \textit{simplex encoding} \citep{mroueh12}, a variant of label embedding in a $C-1$ dimensional space, and introduces an excess risk bound. Previous excess risk bounds have been developed for consistent loss functions. In contrast, drawing from \citep{Steinwart2007HowTC}, we establish a novel excess risk bound for the label embedding framework, which admits an irreducible error and is inherently inconsistent. This error diminishes as the coherence of the embedding matrix decreases and ultimately vanishes under Massart's noise condition \citep{massart06massart}, leading to an excess risk bound of the conventional form.

From a different perspective, \citet{Ramaswamy18abstrain} put forth a novel surrogate loss function for multiclass classification with an abstain option. This abstain option enables the classifier to opt-out from making predictions at a certain cost. Remarkably, their proposed methods not only demonstrate consistency but also effectively reduce the multiclass problems to $\lceil \log C \rceil$ binary classification problems by encoding the classes with their binary representations. In particular, the region in $\cX$ that causes the irreducible error in our excess risk bound is abstained from in \citet{Ramaswamy18abstrain} to achieve lossless dimension reduction in the abstention setting. 

\section{Label embedding by low-coherence matrices} \label{sec:main} 

\makeatletter
\renewcommand{\ALG@name}{Meta-Algorithm}
\makeatother

\begin{algorithm}[tb] \caption{Label Embedding.} \label{algo:embedding}
\begin{algorithmic}[1]
 \STATE {\bfseries Input}: dataset $\cD = \cbs{\prs{x_i, y_i}}_{i=1}^{N}$, embedding matrix $G$, multi-output regression algorithm $\cA, $ the decoder function $\beta^G$  from the embedding space to the label space.
 \STATE Form the regression dataset {$\cD_r = \cbs{\prs{x_i, g_{y_i}}}_{i=1}^{N}$.}
 \STATE Train a regression model $f$ with $\cA$ on $\cD_r$. 
 \STATE {\bfseries Return:} $\beta^G \circ f$.
\end{algorithmic}
\end{algorithm}

\begin{table}[ht]
\centering
\caption{Frequently used symbols in Section \ref{sec:main}}
\begin{tabular}{clcl}
\hline
\textbf{Symbol} & \textbf{Description} & \textbf{Symbol} & \textbf{Description} \\
\hline
$G$ & Embedding matrix & $\cX$ & Feature space \\
$\cY$ & Label space & $P$ & Probability measure on $\cX \times \cY$ \\
$P_{\cX}$ & Marginal distribution of $P$ on $\cX$ & $L_{01}$ & 0-1 loss function \\
$\ell^G$ & Squared loss with embedding $G$ & $\cR_{\cL, P}$ & Risk of $\cL$ under $P$ \\
$\cR^*_{\cL, P}$ & Bayes risk for $\cL$ under $P$ & $\eta(x)$ & Class posterior probabilities \\
$d(\cdot)$ & Difference between top two posteriors & $\lambda^G$ & Coherence of $G$ \\
$\beta^G$ & Decoder from embedding to label & $L^G$ &  $L^G(p, y) = L_{01}(\beta^G(p), y)$ \\
\hline
\end{tabular}\label{tab:notations}
\end{table}

Let $\cX$ denote the feature space and $\cY = \cbs{1, \dots, C}$ denote the label space where $C \in \mathbb{N}$. Let $(X, Y)$ be random variables in $\cX \times \cY$, and let $P$ be the probability measure that governs $(X, Y)$. We use $P_{\cX}$ to denote the marginal distribution of $P$ on $\cX$.

We first introduce the definitions of matrix coherence in Section  \ref{sec:mc}, followed by the notations and problem statement in Section \ref{subsec:notations}, and then present the associated algorithm in Section \ref{subsec:algo}. The excess risk bound and its interpretation are presented in Section \ref{subsec:bound}. Finally, we introduce the condition for lossless label embedding in Section \ref{subsec:lossless}. Frequently used notations appear in Table \ref{tab:notations}.

\subsection{Matrix coherence} \label{sec:mc}
Our theory relies on the notion of the coherence of a matrix $A \in \CC^{n \times C}$, which is the maximum magnitude of the dot products between distinct columns.
\begin{definition}
Let $\cbs{a_j}_{j=1}^C$ be the columns of the matrix $A \in \CC^{n \times C}$, where $\norm{a_j}_2 = 1$ for all $j$. The \textit{coherence} of $A$ is $\lambda = \max_{1 \leq i \neq j \leq C} \abs{\innerprod{a_i, a_j} }$. 
\end{definition}

If $n \ge C$, $\lambda$ is $0$ when the columns of $A$ are orthonormal. When $n < C$, however, the coherence must be positive. Indeed, \citet{Welch74} showed that for $A \in \CC^{n\times C}$ and $n\leq C$, $\lambda \geq \sqrt{\frac{C-n}{n(C-1)}}$. 

There are a number of known constructions of low-coherence matrices when $n < C$. A primary class of examples is the random matrices with columns of unit norms that satisfy the Johnson-Lindenstrauss property \citep{JL84}. For example, a Rademacher matrix has entries that are sampled $i.i.d.$ from a uniform distribution on $\{\frac{1}{\sqrt{n}}, -\frac{1}{\sqrt{n}}\}$. With high probability, a Rademacher random matrix of shape $n \times C$ achieves a coherence $\lambda \leq \sqrt{\frac{c_0 \log C}{n}}$ for some constant $c_0$ \citep{Achlioptas2001}.  While random matrices can be easily obtained and have a low coherence in general, they require explicit storage \citep{NELSON20111238} and can be outperformed in practical problems by some carefully crafted deterministic matrices \citep{Naidu16, Liu20}. Numerous deterministic constructions of low-coherence matrices have been proposed \citep{NELSON20111238, Yu11, Li12, XU2011133, Naidu16}. In particular, \citet{NELSON20111238} propose a deterministic construction that can achieve $\lambda \approx C^{-\frac{1}{4}}$ with $n \approx \sqrt{C}$, which avoids explicit storage of the matrix and can achieve a lower coherence in practice. There are also algorithms that directly optimize matrices for a smaller coherence \citep{Wei2020, Abolghasemi2010, Obermeier2017, Li2013, LU201845}.




\subsection{Problem statement} \label{subsec:notations}




To define the standard classification setting, denote the 0-1 loss $L_{01}: \cY \times \cY \rightarrow \RR$ by $L_{01}\prs{\hat{y}, y} = \mathbbm{1}_{y \neq \hat{y}}$, where $\mathbbm{1}$ is the indicator function. The {\em risk} of a classifier $h$ is $\EE \brs{L_{01}(h(X), Y)}$, and the goal of classification is to learn a classifier from training data whose risk is as close as possible to the {\em Bayes risk}
$\min_{h \in \cH} \EE \brs{L_{01}(h(X), Y)}$, where $\cH = \cbs{\text{measurable } h: \cX \rightarrow \cY}$. 

We now describe an approach to classification based on label embedding, which represents labels as vectors in $n < C$ dimensional complex space $\CC^n$. In particular, let $G$ be an $n \times C$ matrix with unit norm columns, called the {\em embedding matrix}, having coherence $\lambda^G < 1$. The columns of $G$ are denoted by $g_1, g_2, \dots, g_C$, and the column $g_i$ is used to embed the $i$-th label. 

Given an embedding matrix $G$, the original $C$-class classification problem may be reduced to a multi-output regression problem, where the classification instance $(x,y)$ translates to the regression instance $(x,g_y)$. Given training data $\{(x_i, y_i)\}$ for classification, we create training data $\{(x_i, g_{y_i})\}$ for regression, and apply any algorithm for multi-output regression to learn a regression function $f: \cX \to \CC^n$. 

At test time, given a test point $x$, a label $y$ is obtained by taking the nearest neighbor to $f(x)$ among the columns of $G$. In particular, 
define the decoding function $\beta^G: \CC^n \rightarrow \cY$, $\beta^G(p) = \min \cbs{ \argmin_{i\in \cY} \norm{p-g_i}_{2} }$, where $p$ represents the output of a regression model. (Since the $\argmin$ is potentially set-valued, the $\min$ breaks ties in favor of the label with the smallest index.) Then the label assigned to $x$ is $\beta^G(f(x))$. 

Thus, label embedding gives rise to classifiers of the form $\beta^G \circ f$, where $f \in \cF = \cbs{\text{all measurable } f : \cX \rightarrow \CC^n}$. Fortunately, according to the following result, no expressiveness is lost by considering classifiers of this form.


\begin{proposition}
\label{prop:equiv}
    Recall the decoding function $\beta^G(p) = \min \cbs{ \argmin_{i\in \cY} \norm{p-g_i}_{2} }$, the regression models $\cF = \cbs{\text{all measurable } f : \cX \rightarrow \CC^n}$, and the classification models $\cH = \cbs{\text{all measurable } h: \cX \rightarrow \cY}$. Then $\beta^G \circ \cF$ = $\cH$.
\end{proposition}



While the composition $\beta^G \circ \cF$ preserves expressiveness, choosing a low-dimensional embedding in conjunction with a convex surrogate can introduce irreducible error \citep{Ramaswamy2012}. 

This and other results are proved in an appendix. It follows that $\min_{f \in \cF} \EE_P \brs{L_{01}(\beta^G(f(X)), Y)}$ is the Bayes risk for classification as defined earlier. This allows us to focus our attention on learning $f: \cX \to \CC^n$. Toward that end, we now formalize notions of loss and risk for the task of learning a multi-output function $f$ for multiclass classification via label embedding.

 
 

\begin{definition} \label{def:risk}
A loss function for label embedding is a function $\cL: \CC^n \times \cY \rightarrow \RR$. Given such a loss function, define the $\cL$-risk of $f$ with distribution $P$ to be $\mathcal{R}_{\cL, P}: \mathcal{F} \rightarrow \RR , \ \ \mathcal{R}_{\cL, P}(f):= \mathbb{E}_{P} \left[ \cL(f(X), Y) \right]$ and the $\cL$-Bayes risk to be $\mathcal{R}_{\cL, P}^* := \inf_{f\in \mathcal{F}} \mathcal{R}_{\cL, P}(f).$
\end{definition}

Using this notation, the {\em target} loss for learning with a fixed embedding matrix $G$ is the loss function $L^G: \CC^n \times \cY$ defined by $L^G(p, y) := L_{01}(\beta^G(p), y)$. By Prop. \ref{prop:equiv}, the $L^G$-risk of $f$ is the usual classification risk of the associated classifier $h = \beta^G \circ f$, and the $L^G$-Bayes risk is the Bayes risk of the original classification problem. The goal is to find a pair $(f, G)$ that minimizes $\mathcal{R}_{L^G, P}(f)$, or in other words, a pair $(f, G)$ that makes the {\em excess target risk} $\mathcal{R}_{L^G, P} - \mathcal{R}_{L^G, P}^*$ as small as possible. 

While the target loss $L^G$ defines the learning goal, it is not practical as a training objective because of its discrete nature. Therefore, for learning purposes, \citet{hsu09cs, Akata_2013_CVPR, leml} suggest a {\em surrogate} loss, namely, the squared distance between $f(x)$ and $g_y$.  More precisely, for a given embedding matrix $G$, define $\ell^G: \CC^n \times \cY \rightarrow \RR$ as $\ell^G(p, y) := \frac{1}{2} \norm{p-g_y}^2_2$. This surrogate allows us to learn $f$ by applying existing multi-output regression algorithms as we describe next. Thus, the label embedding learning problem is to learn $f$ with small surrogate excess risk. Our subsequent analysis will connect $\cR_{L^G, P} - \cR_{L^G, P}^*$ to $\cR_{\ell^G, P} - \cR_{\ell^G, P}^*$.





\subsection{Learning algorithms}\label{subsec:algo}


Learning algorithms for classification via label embedding, as described thus far, can be summarized by a conceptually simple meta-algorithm, depicted in Meta-Algorithm \ref{algo:embedding}.
This meta-algorithm should not be considered novel as its essential ingredients have been previously introduced \cite{Akata_2013_CVPR, RODRIGUEZ201821}, and several existing algorithms can be seen as instances \citep{hsu09cs, leml, bhatia2015sleec, evron18wltls, Akata_2013_CVPR}.

The meta-algorithm takes as input a training dataset $\cbs{\prs{x_i, y_i}}_{i=1}^{N}$, an embedding matrix $G = \brs{g_1, g_2, \dots, g_C}$, and an algorithm $\cA$ for multi-output regression. It forms the multi-output regression dataset $\cbs{\prs{x_i, g_{y_i}}}_{i=1}^{N}$, and applies $\cA$ to produce a function $f$. The output is the classifier $\beta^G \circ f$. 

For example, the regression algorithm can be specified by selecting a model class $\cF_0$ and a surrogate loss $\ell^G$, and learning $f$ by empirical risk minimization:
\begin{equation}
    \min_{f \in \cF_0} \frac{1}{N} \sum_{i=1}^N \ell^G(f(x_i), y_i).
\end{equation}
In our experiments we select $\cF_0$ to be a neural network with $n$ nodes in the output layer, and $\ell^G$ to be the squared error loss mentioned previously, the same surrogate analyzed in the next section. 



As a remark, we have treated 
$G$ as a fixed input to the meta-algorithm, but it can also be trained jointly with $f$. Our analysis is independent of model training, and thus applies to this case as well. 

In the next section we present theory that supports selecting $G$ with low coherence, which has not previously been examined in the label embedding literature.


\subsection{Excess risk bound} \label{subsec:bound}

We present an excess risk bound, which relates the excess surrogate risk to the excess target risk. This bound justifies the use of the squared error surrogate, and also reveals a trade-off between the reduction in dimensionality (as reflected by $\lambda^G$) and the potential penalty in accuracy. 


To state the bound, define the class posterior $\eta(x) = \prs{\eta_1(x), \dots, \eta_C(x)}$ where $\eta_i(x) = P_{Y\mid X=x}(i)$. Define $d(x) = \max_i \eta_i(x) -  \max_{i\notin \argmax_j \eta_j(x)} \eta_i(x)$, which is a measure of ``margin'' at $x$. We discuss this quantity further after the main result, which we now state.


\begin{theorem} \label{thm:main}

Consider an embedding matrix $G$ with unit norm columns $g_1, g_2, \dots, g_C$ and coherence $\lambda^G = \max_{i \neq j} \abs{\langle g_i, g_j \rangle}$. Recall $\cR_{L^G, P}$ and $\cR_{\ell^G, P}$ represent risks as defined in Definition \ref{def:risk}, with $\cR^*_{L^G, P}$ and $\cR^*_{\ell^G, P}$ being the corresponding Bayes risks. Then for all $f \in \mathcal{F}$,
\begin{align}
    \cR_{L^G, P}(f) - \cR^*_{L^G, P} & \leq  \inf_{r > \frac{2\lambda^G}{1+\lambda^G}} \biggl\{ \frac{2\lambda^G}{1+\lambda^G} P_{\cX}(d(X) < r)  \label{bound:1stterm} \\
    & \quad + \sqrt{\frac{4-2\lambda^G}{(1+\lambda^G)^2}P_{\cX}(d(X) < r)(\cR_{\ell^G, P}(f) - \cR^*_{\ell^G, P})} \label{bound:2ndterm}  \\
    & \quad + \frac{4-2\lambda^G}{(r(1+\lambda^G)-2\lambda^G)^2} \prs{\cR_{\ell^G, P}(f) - \cR^*_{\ell^G, P}}
     \biggl\}  \label{bound:3rdterm} 
\end{align}
\end{theorem}

As mentioned earlier, the goal of learning is to minimize the excess target risk $\cR_{L^G, P}(f) - \cR^*_{L^G, P}$. The theorem shows that this goal can be achieved up to the first (irreducible) term by minimizing the excess surrogate risk $\cR_{\ell^G, P}(f) - \cR^*_{\ell^G, P}$. The excess surrogate risk can be driven to zero by any consistent algorithm for multi-output regression with squared error loss. We provide a formal definition of consistency and illustrate it with examples of both consistent and inconsistent loss functions in the appendix.


The quantity $d(x)$ can be viewed as a measure of noise (inherent in the joint distribution of $(X,Y)$) at a point $x$. While $\max_i \eta_i(x)$ represents the probability of the most likely label occurring, $\max_{i\notin \argmax_j \eta_j(x)} \eta_i(x)$ represents the probability of the second most likely label occurring.  A large $d(x)$ implies that $\argmax_i \eta_i(x)$ is, with high confidence, the correct prediction at $x$. In contrast, if $d(x)$ is small, our confidence in predicting $\argmax_i \eta_i(x)$ is reduced, as the second most likely label has a similar probability of being correct.

 As pointed out by \citet{Ramaswamy2012}, any convex surrogate loss function operating on a dimension less than $C-1$ inevitably suffers an irreducible error. 
 In the present setting, this irreducible error is manifested in the first term, which depends on the coherence $\lambda^G$ of the embedding matrix. 
 Given a classification problem with $C$ classes, a larger embedding dimension $n$ will lead to a smaller coherence 
 $\lambda^G$, making $d(x) > \frac{2\lambda^G}{1+\lambda^G}$ on a larger region in $\cX$ at the cost of increasing computational complexity. On the other hand, by choosing a smaller $n$, $d(x) < \frac{2\lambda^G}{1+\lambda^G}$ on a larger region in $\cX$, increasing the first term in Theorem \ref{thm:main}. This interpretation highlights the balance between the benefits of dimensionality reduction and the potential impact on prediction accuracy, as a function of the coherence of the embedding matrix, $\lambda^G$, and the noisiness measure, $d(x)$.

\subsection{Improvement under low noise} \label{subsec:lossless}
While Theorem \ref{thm:main} holds universally (for all distributions $P$), by considering a specific subset of distributions, we can derive a more conventional form of the excess risk bound.  As a direct consequence of Theorem \ref{thm:main}, under the multiclass extension of the Massart noise condition \citep{massart06massart}, which requires $d(X) > c$ with probability 1 for some $c$, 
the first and second terms in Theorem \ref{thm:main} vanish. In this case, we 
recover a conventional excess risk bound, where $\mathcal{R}_{L^G, P}(f) - \mathcal{R}^*_{L^G, P}$ tends to $0$ with $\mathcal{R}_{\ell^G, P}(f) - \mathcal{R}^*_{\ell^G, P}$. We now formalize this.
\begin{definition} [Multiclass Massart Noise Condition]
    The distribution $P$ on $\cX\times\cY$ is said to satisfy the Multiclass Massart Noise Condition if and only if $\exists c > 0$ such that $P_{\cX}(d(X) \geq c) = 1$.
\end{definition} 

\begin{corollary}
Consider the same setup as in Theorem \ref{thm:main} and assume $P$ satisfies the Multiclass Massart Noise Condition. If $\lambda^G \in \prs{0, \frac{\essinf d}{2 - \essinf d}}$, then for all $f \in \mathcal{F}$
$$\quad \cR_{L^G, P}(f) - \cR^*_{L^G, P} \leq   \frac{4-2\lambda^G}{\prs{(1+\lambda^G) \essinf d - 2\lambda^G}^2} \prs{\cR_{\ell^G, P}(f) - \cR^*_{\ell^G, P}},$$
where $\essinf d$ is the essential infimum of $d$, $i.e., \essinf d = \sup \cbs{a \in \RR: P_{\cX} (d(X) < a) = 0}$. 
\end{corollary}


 For the special case where all labels are deterministic, we have $\essinf d (x) = 1$ for all $x$, leading to the simplified bound $$\cR_{L^G, P}(f) - \cR^*_{L^G, P} \leq \frac{4-2\lambda^G}{(1-\lambda^G)^2} ( \cR_{\ell^G, P}(f) - \cR^*_{\ell^G, P}).$$ This observation implies that for deterministic labels, any embedding matrix with coherence less than 1 ensures consistency. Furthermore, a smaller coherence means faster convergence.

\section{Experiments}\label{sec:exp}

\begin{table}[ht] 
\centering
\caption{Summary of the datasets used in the experiments. $N_{\text{train}}$ is the number of training data points, $N_{\text{test}}$ the number of test data points, $D$ the number of features, and $C$ the number of classes.}
\begin{tabular}{lcccc} 
\hline
Dataset & $N_{\text{train}}$ & $N_{\text{test}}$ & $D$ & $C$ \\ 
\hline
LSHTC1 & 83805 & 5000 & 328282 & 12046 \\
DMOZ & 335068 & 38340 & 561127 & 11879 \\
ODP & 975936 & 493014 & 493014 & 103361 \\
\hline
\end{tabular}
\label{tab:datasets}
\end{table}



\begin{figure*}[htbp]
\centering
\begin{tikzpicture}
    \begin{groupplot}[
        group style={
            group size=3 by 1,
            horizontal sep=1cm,
            group name=my plots
        },
        width=5.7cm,
        height=6cm,
        grid=both,
        axis lines=middle,
        minor tick num=5,
        enlargelimits={abs=0.05},
        xlabel=Coherence,
        ylabel=ACC,
        xmin=0, xmax=1,
        x dir=reverse,
        legend style={
            at={(1.65,-0.40)}, 
            anchor=north,
            legend columns=6,
            /tikz/every even column/.append style={column sep=0.2cm},
            fill opacity=1,
            draw opacity=1,
            text opacity=1,
            font=\small, 
            /tikz/every mark/.append style={scale=1} 
        },
        label style={font=\tiny},
        tick label style={font=\tiny},
        xlabel style={at={(axis description cs:-0.00,-0.15)},anchor=north, font=\finy},
        title style={font=\tiny}, 
        yticklabel style={
            font=\tiny,
            /pgf/number format/assume math mode=true, 
        },
    ]

\nextgroupplot[
    title=LSHTC1,
    xlabel=Coherence,
    ylabel=Accuracy,
    x dir=reverse,
    ymin=0.2,
    ymax=0.29,
    axis y line*=right, 
    xlabel style={font=\tiny},
    ylabel style={at={(rel axis cs:1.05,1.27)}, anchor=west, rotate=-90, font=\tiny},
]
\addplot[color=blue, solid, mark=*] coordinates {
    (0.8061032295, 0.18708)
    (0.6464621782, 0.24636)
    (0.4748247921, 0.2826)
    (0.3446567237, 0.29972)
    (0.2538558483, 0.30608)
};
\addlegendentry{Gaussian}

\addplot[color=green, dashed, mark=*] coordinates {
    (0.6720468876, 0.24768)
    (0.5038791796, 0.28416)
    (0.3742951498, 0.29972)
    (0.2647924658, 0.3058)
    (0.1860822529, 0.30984)
};
\addlegendentry{$\CC$-Gaussian}

\addplot[color=red, dash pattern=on 3pt off 3pt, mark=square*] coordinates {
    (0.9125000596, 0.18656)
    (0.6875, 0.24768)
    (0.5125000536, 0.28304)
    (0.359375, 0.3006)
    (0.2539062798, 0.30656)
};
\addlegendentry{Rademacher}

\addplot[color=purple, dotted, mark=triangle*] coordinates { 
    (0.09407208685, 0.29664)
    (0.08873565094, 0.29708)
    (0.06311944033, 0.30496)
    (0.04432422074, 0.31084)
};
\addlegendentry{Nelson}

\draw [cyan, dash pattern=on 3pt off 1pt on 1pt off 1pt, line width=1pt] (axis cs:0,0.26296) -- (axis cs:1,0.26296);
\addlegendimage{cyan, dash pattern=on 3pt off 1pt on 1pt off 1pt, line width=1pt}
\addlegendentry{Cross Entropy}

\draw [magenta, dash pattern=on 3pt off 5pt on 1pt off 5pt, line width=1pt] (axis cs:0,0.28032) -- (axis cs:1,0.28032);
\addlegendimage{magenta, dash pattern=on 3pt off 5pt on 1pt off 5pt, line width=1pt}
\addlegendentry{Squared Loss}

\draw [yellow, dash pattern=on 5pt off 10pt, line width=1pt] (axis cs:0,0.29064) -- (axis cs:1,0.29064);
\addlegendimage{yellow, dash pattern=on 5pt off 10pt, line width=1pt}
\addlegendentry{Annex ML}

\draw [black, dash pattern=on 1pt off 10pt, line width=1pt] (axis cs:0,0.2224) -- (axis cs:1,0.2224);
\addlegendimage{black, dash pattern=on 1pt off 10pt, line width=1pt}
\addlegendentry{Parabel}

\draw [green, dash pattern=on 3pt off 1pt on 1pt off 1pt on 1pt off 1pt, line width=1pt] (axis cs:0,0.2204) -- (axis cs:1,0.2204);
\addlegendimage{green, dash pattern=on 3pt off 1pt on 1pt off 1pt on 1pt off 1pt, line width=1pt}
\addlegendentry{PD-Sparse}

\draw [blue, dash pattern=on 1pt off 1pt, line width=1pt] (axis cs:0,0.2260) -- (axis cs:1,0.2260);
\addlegendimage{blue, dash pattern=on 1pt off 1pt, line width=1pt}
\addlegendentry{PPD-sparse}

\draw [black, dash pattern=on 3pt off 1pt on 3pt off 1pt, line width=1pt] (axis cs:0,0.1652) -- (axis cs:1,0.1652);
\addlegendimage{black, dash pattern=on 3pt off 1pt on 3pt off 1pt, line width=1pt}
\addlegendentry{WLSTS}


\nextgroupplot[
    title=DMOZ, 
    xlabel=Coherence, 
    ylabel=Accuracy, 
    x dir=reverse, 
    ymin=0.41, 
    ymax=0.44,
    axis y line*=right, 
    yticklabel style={font=\tiny},
    xlabel style={font=\tiny},
    ylabel style={at={(rel axis cs:1.05,1.20)}, anchor=west, rotate=-90, font=\tiny},
]
\addplot[color=blue, solid, mark=*] coordinates {
    (0.8061032295, 0.3866)
    (0.6464621782, 0.44156)
    (0.4748247921, 0.46758)
    (0.3422225833, 0.47578)
    (0.2538558483, 0.47916)
};

\addplot[color=green, dashed, mark=*] coordinates {
    (0.6720468613, 0.44108)
    (0.5038791895, 0.46776)
    (0.3742951862, 0.4751)
    (0.2647924693, 0.47836)
    (0.1860822438, 0.47902)
};

\addplot[color=red, dash pattern=on 3pt off 3pt, mark=square*] coordinates {
    (0.9000000596, 0.38406)
    (0.6875, 0.44154)
    (0.5125000536, 0.46604)
    (0.359375, 0.47672)
    (0.2539062798, 0.4782)
};

\addplot[color=purple, dotted, mark=triangle*] coordinates {
    (0.09407208685, 0.47568)
    (0.08873565094, 0.47712)
    (0.06311944033, 0.48006)
    (0.04432422074, 0.48022)
};

\draw [cyan, dash pattern=on 3pt off 1pt on 1pt off 1pt, line width=1pt] (axis cs:0,0.46708) -- (axis cs:1,0.46708);

\draw [magenta, dash pattern=on 3pt off 5pt on 1pt off 5pt, line width=1pt] (axis cs:0,0.38376) -- (axis cs:1,0.38376);

\draw [yellow, dash pattern=on 5pt off 10pt, line width=1pt] (axis cs:0,0.3981744) -- (axis cs:1,0.3981744);

\draw [black, dash pattern=on 1pt off 10pt, line width=1pt] (axis cs:0,0.3855764215) -- (axis cs:1,0.3855764215);

\draw [green, dash pattern=on 3pt off 1pt on 1pt off 1pt on 1pt off 1pt, line width=1pt] (axis cs:0,0.3976) -- ( axis cs:1,0.3976);

\draw [blue, dash pattern=on 1pt off 1pt, line width=1pt] (axis cs:0,0.3930) -- (axis cs:1,0.3930);

\draw [black, dash pattern=on 3pt off 1pt on 3pt off 1pt, line width=1pt] (axis cs:0,0.1360) -- (axis cs:1,0.1360);

\nextgroupplot[
    title=ODP, 
    xlabel=Coherence, 
    ylabel=Accuracy, 
    x dir=reverse, 
    ymin=0.18, 
    ymax=0.19, 
    xmin=0, 
    xmax=0.6,
    axis y line*=right, 
    yticklabel style={font=\tiny},
    xlabel style={font=\tiny},
    ylabel style={at={(rel axis cs:1.05,1.32)}, anchor=west, rotate=-90, font=\tiny},
]
\addplot[color=blue, solid, mark=*] coordinates {
    (0.4989090681, 0.15316)
    (0.3574472308, 0.17616)
    (0.260108012, 0.1935)
    (0.1867874831, 0.20768)
    (0.1345226377, 0.21806)
    (0.09145614207, 0.22204)
};

\addplot[color=green, dashed, mark=*] coordinates {
    (0.3767816992, 0.17632)
    (0.2703052669, 0.19388)
    (0.196522474, 0.2074)
    (0.1365617307, 0.21778)
    (0.09691405179, 0.2214)
    (0.07165507186, 0.21898)
};

\addplot[color=red, dash pattern=on 3pt off 3pt, mark=square*] coordinates {
    (0.5187500596, 0.15332)
    (0.36875, 0.1763)
    (0.2609375298, 0.1936)
    (0.18125, 0.20748)
    (0.1332031399, 0.21812)
    (0.09375, 0.22128)
};

\addplot[color=purple, dotted, mark=triangle*] coordinates {
    (0.055, 0.2014)
    (0.0443, 0.20862)
    (0.0313, 0.21914)
    (0.0221, 0.222775)
};

\draw [cyan, dash pattern=on 3pt off 1pt on 1pt off 1pt, line width=1pt] (axis cs:0,0.1399) -- (axis cs:1,0.1399);

\draw [magenta, dash pattern=on 3pt off 5pt on 1pt off 5pt, line width=1pt] (axis cs:0,0.1864) -- (axis cs:1,0.1864);

\draw [yellow, dash pattern=on 5pt off 10pt, line width=1pt] (axis cs:0,0.2161218) -- (axis cs:1,0.2161218);

\draw [black, dash pattern=on 1pt off 10pt, line width=1pt] (axis cs:0,0.1709140106) -- (axis cs:1,0.1709140106);

\draw [blue, dash pattern=on 1pt off 1pt, line width=1pt] (axis cs:0,0.1366) -- (axis cs:1,0.1366);

    \end{groupplot}

\end{tikzpicture}
\caption{These plots reveal an inverse correlation between embedding matrix coherence and classification accuracy across different datasets, with coherence on the horizontal axis and accuracy on the vertical. Non-LOCOLE methods are plotted as horizontal lines because they do not depend on the embedding dimensionality.}
\label{fig:coherence_vs_acc}
\end{figure*}

\begin{table*}[ht] 
\centering
\caption{Accuracy on Various Datasets. RRM denotes \textbf{R}ademacher \textbf{R}andom \textbf{M}atrices, GRM stands for \textbf{G}aussian \textbf{R}andom \textbf{M}atrices, and $\CC$GRM stands for Complex \textbf{G}aussian \textbf{R}andom \textbf{M}atrices. Methods without an embedding dimension (`-Dim') are non-LOCOLE methods.  We stop training when the training time reaches 50 hours and indicate this in the table. } 
\setlength{\tabcolsep}{3pt} 
\begin{tabularx}{0.98\textwidth}{p{0.31\textwidth} p{0.31\textwidth} p{0.31\textwidth}}
\multicolumn{1}{c}{\textbf{LSHTC1}} & \multicolumn{1}{c}{\textbf{DMOZ}} & \multicolumn{1}{c}{\textbf{ODP}} \\
\toprule
\begin{tabular}[t]{@{}lc@{}}
\footnotesize Method(-Dim) & \footnotesize Acc.(Mean$\pm$Std) \\
\midrule
GRM–32 & $18.71 \pm 0.22\%$ \\
GRM–64 & $24.64 \pm 0.23\%$ \\
GRM–128 & $28.26 \pm 0.19\%$ \\
GRM–256 & $29.97 \pm 0.16\%$ \\
GRM–512 & $30.61 \pm 0.22\%$ \\
$\CC$GRM–32 & $24.77 \pm 0.31\%$ \\
$\CC$GRM–64 & $28.42 \pm 0.22\%$ \\
$\CC$GRM–128 & $29.97 \pm 0.21\%$ \\
$\CC$GRM–256 & $30.58 \pm 0.13\%$ \\
$\CC$GRM–512 & $30.98 \pm 0.14\%$ \\
RRM-32 & $18.66 \pm 0.20\%$ \\
RRM-64 & $24.77 \pm 0.17\%$ \\
RRM-128 & $28.30 \pm 0.27\%$ \\
RRM-256 & $30.06 \pm 0.21\%$ \\
RRM-512 & $30.66 \pm 0.22\%$ \\
Nelson-113 & $29.66 \pm 0.12\%$ \\
Nelson-127 & $29.71 \pm 0.13\%$ \\
Nelson-251 & $30.50 \pm 0.16\%$ \\
Nelson-509 & \cellcolor{highlightcolor}$31.08 \pm 0.15\%$ \\
MLP (CE) & $26.30 \pm 0.36\%$ \\
MLP (SE) & $28.03 \pm 0.17\%$ \\
Annex ML & $29.06 \pm 0.35\%$ \\
Parabel & $22.24 \pm 0.00\%$ \\
WLSTS & $16.52 \pm 1.43\%$ \\
PDSparse & $22.04 \pm 0.06\%$ \\
PPDSparse & $22.60 \pm 0.11\%$ \\

\bottomrule
\end{tabular}
&
\begin{tabular}[t]{@{}lc@{}}
\footnotesize Method(-Dim) & \footnotesize Acc.(Mean$\pm$Std) \\
\midrule
GRM–32 & $38.66 \pm 0.18\%$ \\
GRM–64 & $44.16 \pm 0.01\%$ \\
GRM–128 & $46.76 \pm 0.05\%$ \\
GRM–256 & $47.58 \pm 0.12\%$ \\
GRM–512 & $47.92 \pm 0.07\%$ \\
$\CC$GRM–32 & $44.11 \pm 0.04\%$ \\
$\CC$GRM–64 & $46.78 \pm 0.07\%$ \\
$\CC$GRM–128 & $47.51 \pm 0.09\%$ \\
$\CC$GRM–256 & $47.84 \pm 0.06\%$ \\
$\CC$GRM–512 & $47.90 \pm 0.08\%$ \\
RRM-32 & $38.41 \pm 0.17\%$ \\
RRM-64 & $44.15 \pm 0.05\%$ \\
RRM-128 & $46.60 \pm 0.04\%$ \\
RRM-256 & $47.67 \pm 0.09\%$ \\
RRM-512 & $47.82 \pm 0.06\%$ \\
Nelson-113 & $47.57 \pm 0.06\%$ \\
Nelson-127 & $47.71 \pm 0.04\%$ \\
Nelson-251 & $48.01 \pm 0.04\%$ \\
Nelson-509 & \cellcolor{highlightcolor}$48.02 \pm 0.06\%$ \\
MLP (CE) & $46.71 \pm 0.06\%$ \\
MLP (SE) & $38.38 \pm 0.14\%$ \\
Annex ML & $39.82 \pm 0.14\%$ \\
Parabel & $38.56 \pm 0.00\%$ \\
WLSTS & $13.60 \pm 1.49\%$ \\
PDSparse & $39.76 \pm 0.03\%$ \\
PPDSparse & $39.30 \pm 0.07\%$ \\
\bottomrule
\end{tabular}
&
\begin{tabular}[t]{@{}lc@{}}
\footnotesize Method(-Dim) & \footnotesize Acc.(Mean$\pm$Std) \\
\midrule
GRM–256 & $17.62 \pm 0.07\%$ \\
GRM–512 & $19.35 \pm 0.04\%$ \\
GRM–1024 & $20.77 \pm 0.03\%$ \\
GRM–2048 & $21.81 \pm 0.07\%$ \\
GRM–4096 & $22.20 \pm 0.04\%$ \\
$\CC$GRM–256 & $19.39 \pm 0.07\%$ \\
$\CC$GRM–512 & $20.74 \pm 0.08\%$ \\
$\CC$GRM–1024 & $21.78 \pm 0.04\%$ \\
$\CC$GRM–2048 & $22.14 \pm 0.06\%$ \\
$\CC$GRM–4096 & $21.90 \pm 0.07\%$ \\
RRM-256 & $17.63 \pm 0.04\%$ \\
RRM-512 & $19.36 \pm 0.04\%$ \\
RRM-1024 & $20.75 \pm 0.05\%$ \\
RRM-2048 & $21.81 \pm 0.06\%$ \\
RRM-4096 & $22.13 \pm 0.08\%$ \\
Nelson-331 & $20.14 \pm 0.06\%$ \\
Nelson-509 & $20.86 \pm 0.09\%$ \\
Nelson-1021 & $21.91 \pm 0.06\%$ \\
Nelson-2039 & \cellcolor{highlightcolor}$22.30 \pm 0.06\%$ \\
MLP (CE) & $13.99 \pm 0.11\%$ \\
MLP (SE) & $18.64 \pm 0.02\%$ \\
Annex ML & $21.61 \pm 0.04\%$ \\
Parabel & $17.09 \pm 0.00\%$ \\
WLSTS & Train $>$ 50 hrs \\
PDSparse & Train $>$ 50 hrs \\
PPDSparse & $13.66 \pm 0.05\%$ \\
\bottomrule
\end{tabular}
\end{tabularx}
\label{tab:acc}
\end{table*}


\begin{table}[ht]
\centering
\caption{Accuracy and training time across methods. See Section \ref{sec:advantage} for details.}
\begin{tabular}{lccccc}
\toprule
\footnotesize Dataset & \footnotesize Metric & \multicolumn{2}{c}{\footnotesize Single Node} & \multicolumn{2}{c}{\footnotesize Distributed} \\
\cmidrule(lr){3-4} \cmidrule(lr){5-6}
 & & \footnotesize PD-Sparse & \footnotesize LOCOLE & \footnotesize PPD-Sparse & \footnotesize LOCOLE \\
\midrule
\textit{LSHTC1} & Accuracy & $22.04\%$ & $23.42\%$ & $22.60\%$ & $23.38\%$ \\
 & Training Time & 230s & 55s & 135s & 14s \\
\midrule
\textit{DMOZ} & Accuracy & $39.76\%$ & $41.09\%$ & $39.30\%$ & $40.57\%$ \\
 & Training Time & 829s & 254s & 656s & 68s \\
\midrule
\textit{ODP} & Accuracy & N/A & $15.11\%$ & $13.66\%$ & $15.06\%$ \\
 & Training Time & $>$ 50 hrs & 2045s & 668s & 350s \\
\bottomrule
\end{tabular}\label{tab:efficiency_supercolumn}
\end{table}


In this section, we present an experimental evaluation of our proposed method, LOCOLE (LOw COherence Label Embedding), for extreme multiclass classification.
LOCOLE is an instance of  Meta-Algorithm 1, as we explain below. 

\subsection{Experiment setup}
We conduct experiments on three large-scale datasets, DMOZ \citep{partalas2015lshtc}, LSHTC1 \citep{partalas2015lshtc}, and ODP \citep{Bennett2009}, which are extensively used for benchmarking extreme classification algorithms. The details of these datasets are provided in Table \ref{tab:datasets}, with DMOZ and LSHTC1 available from \citep{pdsparse}\footnote{\url{https://people.csail.mit.edu/xrhuang/PDSparse/index.html}}, and ODP from \citep{Medini19MACH}. 

We apply LOCOLE where the multi-output regression algorithm is to train a multilayer perceptron with $n$ output layer nodes using the surrogate loss $\ell^G$. This is implemented using PyTorch, with a 2-layer fully connected neural network used for the LSHTC1 and DMOZ datasets and a 4-layer fully connected neural network for the ODP dataset. The hyperparameters are tuned on a held-out dataset. 

We experiment with the following types of embedding matrices: 
\begin{itemize}
    \setlength\itemsep{0.1em}
    \item Rademacher: entries sampled $i.i.d.$ from a uniform distribution on $\{\frac{1}{\sqrt{n}}, -\frac{1}{\sqrt{n}}\}$.
    \item Gaussian: entries sampled $i.i.d.$ from $\cN(0, \frac{1}{n})$. Columns are normalized to have unit norm.
    \item $\CC$-Gaussian: the real and imaginary parts of each entry are sampled $i.i.d.$ from $\cN(0, \frac{1}{2n})$. Each column is normalized to have unit norm.
    \item Nelson \citep{NELSON20111238}: a deterministic construction of low-coherence complex matrices in which $n$ must be prime. If $r$ is an integer, $n > r$ is a prime number, and $n^r \geq C$, then the coherence of the constructed matrix is at most $\frac{r-1}{\sqrt{n}}$. We choose $r=2$ in experiments.
\end{itemize}


We compare LOCOLE against the following state-of-the-art methods: 
\begin{itemize}
    \setlength\itemsep{0.1em}
    \item PD-Sparse \citep{pdsparse}: an efficient solver designed to exploit the sparsity in extreme classification.  
    \item PPD-Sparse \citep{yen17ppdsparse}: a multi-process extension of PD-Sparse \citep{pdsparse}.
    \item Parabel \citep{prabhu2018parabel}: a tree-based method which builds a label-hierarchy.
    \item AnnexML \citep{tagami17annexml}: a method which constructs a $k$-nearest neighbor graph of the label vectors and attempts to reproduce the graph structure from a lower-dimension feature space.
    \item WLSTS \citep{evron18wltls}: a method based on \textit{error correcting output coding} which embeds labels by codes induced by graphs.
    \item MLP (CE) Standard multilayer perceptron classifier with cross-entropy loss.
    \item MLP (SE): Standard multilayer perceptron classifier with squared error loss.
\end{itemize}
For these methods, we use the hyperparameters suggested by their papers or accompanying code.

 While there are numerous label embedding algorithms \citep{hsu09cs, leml, bhatia2015sleec, evron18wltls, Akata_2013_CVPR} which could be seen as specific subsets or instances of Meta-Algorithm \ref{algo:embedding}, our comparisons will not include all of them. The approach in \citet{Akata_2013_CVPR} is not tailored for extreme classification and requires auxiliary information to construct the embedding matrices. On the other hand, methods like \citet{leml, bhatia2015sleec, hsu09cs} have been outperformed significantly by the current state-of-the-art algorithms as shown in prior work \citep{pdsparse, bengio10}.



While our theoretical framework is adaptable to any form of embedding, whether trained, fixed, or derived from auxiliary information, we focus on fixed embeddings in our empirical studies. This choice stems from the absence of a standardized or widely accepted methodology for jointly training the embedding function and the classifier. Although heuristic approaches exist, we are concerned that they may introduce confounding factors, complicating empirical interpretation. By centering on fixed embeddings, we ensure a controlled evaluation, minimizing such confounding factors and emphasizing the role of coherence of the embeddings.

All neural network training is performed on a single NVIDIA A40 GPU with 48GB RAM. We explore different embedding dimensions and provide figures showing the relationship between the coherence of $G$ and the accuracy. Full experimental details are presented in section \ref{appsec:exp} in the appendix. 

\subsection{Experimental results}

The experimental results in Table \ref{tab:acc} highlight the superior performance of our proposed method across various datasets. In Table \ref{tab:acc}, the column Method (-Dim) denotes the method or embedding type along with its dimension, while the Acc. (Mean $\pm$ Std) column presents the mean and standard deviation of accuracy over 5 randomized repetitions. Methods without an embedding dimension (-Dim) are non-LOCOLE methods, which are unaffected by the embedding dimension. We highlight the best-performing method for each dataset. For each embedding method, there is a clear trend of increasing accuracy with larger embedding dimension. Ultimately, every embedding type outperforms all non-LOCOLE methods as the embedding dimension increases, with the Nelson embedding at dimension 509 achieving the highest accuracies across all datasets. We stop the training when the training time reaches 50 hours and indicate this in the table.

We plot the accuracies as the coherence of the embedding matrix decreases in Figure \ref{fig:coherence_vs_acc} for the LSHTC1, DMOZ, and ODP datasets. Alongside, we include several baselines for comparison. Figure \ref{fig:coherence_vs_acc} demonstrates a negative correlation between the coherence of the embedding matrix and the accuracy, confirming our theoretical analysis. We include plots of coherence vs precision and recall in Section \ref{appsec:exp} of the appendix, along with low-dimensional projections of the embedding vectors.

\subsection{Computational advantage}
\label{sec:advantage}
PD-Sparse \citep{pdsparse} and PPD-Sparse \citep{ppdsparse} are among the most computationally efficient methods for training for extreme classification, to the best of our knowledge. PD-Sparse and PPD-Sparse both efficiently fit a linear model for classification with a multiclass hinge loss and elastic net regularization, which is regularization with both $\ell_1$ and $\ell_2$ penalties. 
For comparison, we apply LOCOLE using the Rademacher embedding to elastic net-regularized \citep{zou2005} linear regression with $\ell^G$ loss. 
To compare the computational efficiency, we set the embedding dimension to $n=360$. In our distributed implementation (multi-output linear regression can be trivially parallelized by solving multiple scalar-output linear regressions), each node independently solves a subset of elastic net linear regressions with scalar output, effectively spreading out the computation. In Table \ref{tab:efficiency_supercolumn}, we compare LOCOLE with Rademacher embedding with PD-sparse and PPD-sparse. LOCOLE clearly outperforms PD-sparse and PPD-sparse in both runtime and accuracy. We train the PD-Sparse method and Single-Node LOCOLE on Intel Xeon Gold 6154 processors, equipped with 36 cores and 180GB of memory. The distributed LOCOLE and the PPD-Sparse method --- also implemented in a distributed fashion --- are trained across 10 CPU nodes, harnessing 360 cores and 1.8TB of memory in total. 

\section{Conclusion and future work}\label{sec:conc}
We provide a theoretical analysis for label embedding methods in the context of extreme multiclass classification. Our analysis confers a deeper understanding of the tradeoffs between dimensionality reduction and accuracy. We derive an excess risk bound that quantifies this tradeoff in terms of the coherence of the embedding matrix, and show that the statistical penalty for label embedding vanishes under the multiclass Massart condition. Through extensive experiments, we demonstrated that label embedding with low-coherence matrices outperforms existing techniques in both accuracy and runtime. 

While our analysis focuses on excess risk, the reduction of classification to regression means that existing generalization error bounds (for multi-output regression with squared error loss) can be applied to analyze the generalization error in our context. For example, \citet{Reeve20} show that the generalization error grows with the dimension of the output space. This suggests that smaller embedding dimension leads to tighter control of the generalization error.

Building on out theoretical framework, future work may consider extensions to multilabel classification, online learning, zero-shot learning, and learning with rejection. 

\section{Limitations}\label{sec:lim}
Lossless label embedding relies on Massart's noise condition. Although Massart's condition is a well-recognized assumption in learning theory, it is important to note that it remains a theoretical construct that cannot be directly verified in most practical scenarios. This assumption facilitates theoretical analysis but may not always reflect real-world data distributions. 


\clearpage


\bibliography{refs}

\newpage
\appendix
\section{Proofs}
In this section, we present the proofs of the results in the main paper.

\subsection{Proof of proposition \ref{prop:equiv}}
\begin{proof}
    This stems from the following facts: (i) for any given $f \in \cF$, the function $\beta^G \circ f$ is also measurable, $i.e.$, $\beta^G \circ \cF \subset \cH$, (ii) for all $h \in \cH$, the function $f(x) = g_{h(x)}$ ensures that $\beta^G \circ f = h$, and (iii) $f(x) = g_{h(x)}$ is measurable as it only attains finite number of values. So (ii) and (iii) imply $\cH \subset \beta^G \circ \cF$. 
\end{proof}

\subsection{Proof of the main theorem}
Recall that $\beta^G(p) = \min \cbs{\argmin_{i\in\cY} \norm{p-g_i}_2}$.
\begin{lemma}
  $\forall p \in \CC^n, \forall x \in \cX, C_{1,x}(p) = \max_i \eta_i(x) - \eta_{\beta^G(p)}(x)$. .
\end{lemma}
\begin{proof}
Recall that $L^G(p, y) = L_{01}(\beta^G(p), y) = \mathbbm{1}_{\cbs{\min \cbs{ \argmin_{i\in \cY} \norm{p-g_i}_{2} } \neq y} }$. The result follows from the observation that $C_{L^G, x}(p)= 1-\eta_{\beta^G(p)}(x)$ and $C^*_{L^G, x} = 1 - \max_i \eta_i(x)$.
\end{proof}

\begin{lemma}
$C_{2,x}$ is strictly convex and minimized at $p^*_x = G\eta(x)= \sum_{i=1}^C \eta_i(x) g_i$.
\end{lemma}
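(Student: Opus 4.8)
The plan is to observe that $C_{2,x}$ and $C_{\ell,x}$ differ only by the additive constant $C^*_{\ell,x}$, which does not depend on $p$, so strict convexity and the location of the minimizer are identical for both; it therefore suffices to analyze $C_{\ell,x}(p) = \EE_{y\sim P(\cdot\mid x)}\ell(p, g_y) = \sum_{i=1}^C \eta_i(x)\,\tfrac12\norm{p-g_i}_2^2$. The whole argument hinges on the fact that $\eta(x)$ is a probability vector, i.e.\ $\eta_i(x)\ge 0$ and $\sum_{i=1}^C\eta_i(x)=1$, which is precisely what makes the quadratic collapse cleanly.

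For strict convexity and the minimizer I would compute the gradient and Hessian directly. Differentiating term by term, $\nabla_p C_{\ell,x}(p) = \sum_i \eta_i(x)\prs{p-g_i} = p - \sum_i\eta_i(x)g_i = p - G\eta(x)$, where the first summand uses $\sum_i\eta_i(x)=1$. Setting this to zero yields the stationary point $p = G\eta(x) = p^*_x$. The Hessian is $\nabla^2_p C_{\ell,x}(p) = \prs{\sum_i\eta_i(x)} I = I \succ 0$, again by $\sum_i\eta_i(x)=1$, so $C_{\ell,x}$, and hence $C_{2,x}$, is strictly convex. Equivalently, the bias--variance identity $\EE\norm{p-g_Y}_2^2 = \norm{p - \EE g_Y}_2^2 + \EE\norm{g_Y - \EE g_Y}_2^2$ with $\EE g_Y = G\eta(x)$ isolates the $p$-dependence into the strictly convex term $\tfrac12\norm{p - G\eta(x)}_2^2$ and relegates the remainder to a $p$-independent constant, giving the same conclusion.

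The single point requiring care --- and the only step beyond routine calculus --- is that the infimum defining $C^*_{\ell,x}$ is taken over the convex set $S$, not over all of $\RR^n$. I would close this gap by noting that $p^*_x = \sum_i\eta_i(x)g_i$ is a convex combination of $g_1,\dots,g_C$ and hence lies in the convex hull of $\cbs{g_i:i\in\cY}$, which by hypothesis is contained in the interior of $S$. Thus the unconstrained minimizer already belongs to $S$ and is therefore a fortiori the minimizer of the constrained problem over $S$, with strict convexity guaranteeing uniqueness. This establishes that $C_{2,x}$ is strictly convex with unique minimizer $p^*_x = G\eta(x)$, as claimed.
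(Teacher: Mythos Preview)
Your proof is correct and follows essentially the same approach as the paper: both expand $C_{\ell,x}(p)$ as a weighted sum of squared distances, compute the gradient $p - G\eta(x)$ and Hessian $I$, and conclude strict convexity and the minimizer from these. Your treatment is in fact slightly more careful than the paper's, since you explicitly verify that the unconstrained minimizer $p^*_x$ lies in $S$ (via the convex-hull containment hypothesis), a point the paper's proof leaves implicit.
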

\begin{proof}
\begin{align}
    C_{\ell^G, x}(p) &= \EE_{y \sim P_{Y \mid X=x}} \ell_2 \prs{p, g_y} \\ 
    &= \frac{1}{2}\sum_{i=1}^C \eta_i(x) \norm{p-g_i}_2^2 \\
    &= \frac{1}{2}\sum_{i=1}^C \eta_i(x) \innerprod{p-g_i, p-g_i} \\
    &= \frac{1}{2}\innerprod{p, p} - \mathfrak{Re} \innerprod{\sum_{i=1}^C \eta_i(x)g_i, p}  + \frac{1}{2}\sum_{i=1}^C \eta_i(x) \norm{g_i}_2^2 \\
    &= \frac{1}{2} \norm{p - \sum_{i=1}^C \eta_i(x)g_i}_2^2 + \frac{1}{2}\sum_{i=1}^C \eta_i(x) \norm{g_i}_2^2 - \frac{1}{2} \norm{\sum_{i=1}^C \eta_i(x)g_i}_2^2
\end{align}
So $C_{2,x}(p)$ is strictly convex and minimized at $p_x^* = \sum_{i=1}^C \eta_i(x)g_i =G\eta(x)$.
\end{proof}

We'll use $p^*_x$ to denote  $G\eta(x)$ henceforward. 

\begin{lemma} \label{lem: convex func inv}


Let $V$ be a normed space with norm $\norm{\cdot}_V$. Let $u: V \rightarrow \RR$ be a strictly convex function. Let $u$ be minimized at $x^*$ with $u(x^*) = 0$. $\forall x \in V$, $\forall \delta_0 > 0$, if $u(x) < \delta := \inf_{x: \norm{x - x^*}_V=\delta_0} u(x)$, then $\norm{x- x^*}_V < \delta_0$. 

\end{lemma}
\begin{proof}
We first confirm the fact that $\forall t \in \prs{0, 1}$ and $q\in V - \cbs{0}$, $u(x^* + tq) < u(x^* + q)$.
\begin{align}
    u\prs{x^* + tq} 
    &= u\prs{(1-t)x^*+t(x^* + q)} \\
    &< (1-t)u(x^*) + t u(x^*+q) \\
    &= tu(x^* + q) \\
    &< u(x^* + q).
\end{align}
Assume the opposite: For some $u \in V$, $\delta_0 > 0$,  $u(x) < \delta = \inf_{x: \norm{x - x^*}_V=\delta_0} u(x)$ and $\norm{x- x^*}_V \geq \delta_0$. Then $$u(x) = u(x^* + (x-x^*))  \geq u(x^* + \frac{\delta_0}{\norm{x-x^*}_V} (x-x^*)) \geq \delta,$$ which results in a contradiction. 
\end{proof}



\begin{lemma} \label{lem: c2x}
Fix $x \in \cX$. $\forall \delta_0 > 0$ $\forall p \in \CC^n$, $C_{2,x}(p) < \frac{1}{2} \delta_0^2 \implies \norm{p - p^*_x} < \delta_0$.
\end{lemma}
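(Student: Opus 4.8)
The plan is to avoid the general convexity machinery and instead exploit the explicit quadratic form of $C_{2,x}$ that was already extracted in the preceding lemma, which turns the claimed implication into an exact identity rather than a genuine inequality. From that computation we have $DC_{2,x}(p) = p - p^*_x$ and $HC_{2,x}(p) = I$, so $C_{2,x}$ is a quadratic whose Hessian is the identity, which is minimized at $p^*_x$ with value $0$. Matching the constant, linear, and quadratic terms then forces the exact identity $C_{2,x}(p) = \frac{1}{2}\norm{p - p^*_x}^2$; concretely, I would expand $\frac{1}{2}\norm{p - p^*_x}^2 = \frac{1}{2}\innerprod{p,p} - \innerprod{p^*_x, p} + \frac{1}{2}\norm{p^*_x}^2$ and check it agrees with $C_{\ell,x}(p) - C^*_{\ell,x}$, the cross term cancelling precisely because $p^*_x = \sum_{i=1}^C \eta_i(x) g_i$ is exactly the linear coefficient appearing in $C_{\ell,x}$.

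Granting this identity, the lemma is immediate: the hypothesis $C_{2,x}(p) < \frac{1}{2}\delta_0^2$ reads $\frac{1}{2}\norm{p - p^*_x}^2 < \frac{1}{2}\delta_0^2$, which is equivalent to $\norm{p - p^*_x} < \delta_0$. If one instead prefers to route through the general statement, I would apply Lemma \ref{lem: convex func inv} with $h = C_{2,x}$ and $x^* = p^*_x$ (legitimate since $C_{2,x}$ is continuous, strictly convex by the identity Hessian, and vanishes at its minimizer); the lemma hands over the admissible threshold $\delta = \inf_{p : \norm{p - p^*_x} = \delta_0} C_{2,x}(p)$, and because $C_{2,x}$ is constant on that sphere this infimum equals exactly $\frac{1}{2}\delta_0^2$, recovering the constant in the statement.

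I do not expect a genuine obstacle here. The only step needing care is the bookkeeping in the expansion establishing $C_{2,x}(p) = \frac{1}{2}\norm{p - p^*_x}^2$, and even that is essentially forced once the Hessian is known to be the identity and the minimum value is $0$. The real role of this lemma is downstream: it is the quantitative modulus that will let a small surrogate excess risk $C_{2,x}(p)$ be converted into closeness of the prediction $p$ to the minimizer $p^*_x = G\eta(x)$, which is what the JLT geometry of the columns $g_i$ then exploits to control the $0$--$1$ excess risk.
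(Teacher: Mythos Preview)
Your proposal is correct. You actually offer two routes: the direct one via the exact identity $C_{2,x}(p) = \tfrac{1}{2}\norm{p - p^*_x}^2$, and the route through Lemma~\ref{lem: convex func inv}. The paper chooses the second route: it invokes Lemma~\ref{lem: convex func inv} and then computes $\inf_{p:\norm{p-p^*_x}=\delta_0} C_{2,x}(p) = \tfrac{1}{2}\delta_0^2$ by the very same expansion you describe. Your primary route is more economical here, since the identity Hessian makes $C_{2,x}$ exactly a squared distance and the general convexity lemma is overkill; the paper's route has the advantage of reusing a lemma that would also handle non-quadratic surrogates, but for this particular $\ell$ nothing is gained. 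Either way the core computation is identical, so there is no substantive difference.
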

\begin{proof}
Applying Lemma \ref{lem: convex func inv} with $u = C_{2, x}$, $V= \CC^n$, $\norm{\cdot}_V = \norm{\cdot}_2$, and $x^* = p_x^*$, we have  $\forall p \in \CC^n$, $C_{2, x}(p)<\inf_{s:\norm{s-p_x^*}=\delta_0} C_{2,x}(s) \implies \norm{p-p_x^*}_2 < \delta_0$. Furthermore, 

\begin{align}
    \inf_{s:\norm{s-p_x^*}=\delta_0} C_{2,x}(s) 
    &= \inf_{s:\norm{s-p_x^*}=\delta_0} \prs{\frac{1}{2}\innerprod{s, s} - \fRe \innerprod{p^*_x, s}}  - \prs{\frac{1}{2}\innerprod{p_x^*, p_x^*} - \fRe \innerprod{p^*_x, p_x^*}} \\
    &= \inf_{s:\norm{s-p_x^*}=\delta_0} \frac{1}{2}\norm{s-p_x^*}_2^2 \\
    &=\frac{1}{2}\delta_0^2.
\end{align}
\end{proof}

To facilitate our proofs, we denote $\lambda_{j,k} = \innerprod{g_j, g_k}$, $\lambda_{j, k}^{\fRe} = \fRe \lambda_{j,k}$, and $\lambda^G = \max_{j\neq k} \abs{\lambda_{j, k}}$.

\begin{lemma} \label{lem: dis preserv}
$\forall x \in \cX$, $\forall j$, $k \in \brs{C}$, $\forall p \in \CC^n$,
$$\frac{(1+\lambda^G)(\eta_k(x)-\eta_j(x))-2\lambda^G}{\sqrt{2-2\lambda^G}} > \norm{p_x^* - p}_2 \implies \norm{p- g_j}_2 > \norm{p - g_k}_2.$$
\end{lemma}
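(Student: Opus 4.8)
\textbf{Proof proposal for Lemma \ref{lem: dis preserv}.}
The plan is to reduce the claim $\norm{p-g_j}_2 > \norm{p-g_k}_2$ to a comparison of distances from the \emph{center} $p_x^* = G\eta(x)$, since $p$ is assumed close to $p_x^*$. Writing $a = \norm{p_x^* - g_j}_2$ and $b = \norm{p_x^* - g_k}_2$, the triangle inequality gives $\norm{p-g_j}_2 \ge a - \norm{p_x^* - p}_2$ and $\norm{p-g_k}_2 \le b + \norm{p_x^* - p}_2$, so it suffices to prove the single scalar inequality $a - b > 2\norm{p_x^* - p}_2$. Thus the whole argument boils down to obtaining a good lower bound on the gap $a-b$ and then matching it against the hypothesis.

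To bound $a - b$, I would first note that $p_x^* - g_j = G(\eta(x) - e_j) = G\eta^{(j)}$ and likewise $p_x^* - g_k = G\eta^{(k)}$, so Lemma \ref{lem: random proj} applies directly: $a^2 \ge \norm{\eta^{(j)}}_2^2 - \epsilon$ and $b^2 \le \norm{\eta^{(k)}}_2^2 + \epsilon$. Using the identity $\norm{\eta^{(i)}}_2^2 = \norm{\eta}_2^2 - 2\eta_i + 1$ recorded in the proof of Lemma \ref{lem: random proj}, the cross terms cancel and I get
\begin{align}
a^2 - b^2 \ge \norm{\eta^{(j)}}_2^2 - \norm{\eta^{(k)}}_2^2 - 2\epsilon = 2(\eta_k(x) - \eta_j(x)) - 2\epsilon.
\end{align}
Since $\norm{p_x^* - p}_2 \ge 0$, the hypothesis forces $\eta_k(x) - \eta_j(x) > \epsilon$, so the right-hand side is strictly positive and in particular $a > b \ge 0$. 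I can then pass from squared distances to distances through the factorization $a - b = (a^2 - b^2)/(a+b)$, for which I need an \emph{upper} bound on $a+b$. Because $\eta(x) \in \Delta^C$ gives $\norm{\eta}_2^2 \le 1$ and $\eta_i \ge 0$, we have $\norm{\eta^{(i)}}_2^2 = \norm{\eta}_2^2 - 2\eta_i + 1 \le 2$, hence $a^2, b^2 \le 2 + \epsilon$ and $a + b \le 2\sqrt{2+\epsilon}$. Combining these yields $a - b \ge \frac{(\eta_k(x)-\eta_j(x)) - \epsilon}{\sqrt{2+\epsilon}}$.

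Finally I would assemble the pieces: the displayed triangle-inequality reduction together with the gap bound gives
\begin{align}
\norm{p-g_j}_2 - \norm{p-g_k}_2 \ge (a-b) - 2\norm{p_x^* - p}_2 \ge \frac{(\eta_k(x)-\eta_j(x)) - \epsilon}{\sqrt{2+\epsilon}} - 2\norm{p_x^* - p}_2,
\end{align}
and the hypothesis $\frac{\eta_k(x) - \eta_j(x) - \epsilon}{2\sqrt{2+\epsilon}} > \norm{p_x^* - p}_2$ makes this strictly positive, which is exactly the claim. The one point demanding care — and the main obstacle — is the transition from squared distances (the natural output of the JLP estimate) to ordinary distances: one must verify that the hypothesis really guarantees $\eta_k(x) - \eta_j(x) > \epsilon$ so that $a^2 - b^2 > 0$ and the direction of the inequality $\frac{a^2-b^2}{a+b} \ge \frac{a^2-b^2}{2\sqrt{2+\epsilon}}$ is correct; the remaining manipulations are routine and the constant $2\sqrt{2+\epsilon}$ in the statement is precisely the $a+b$ bound.
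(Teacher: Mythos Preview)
Your proof is correct and follows essentially the same route as the paper: reduce to comparing $a=\norm{p_x^*-g_j}_2$ and $b=\norm{p_x^*-g_k}_2$ via the triangle inequality, invoke Lemma~\ref{lem: random proj} to get $a^2-b^2\ge 2(\eta_k-\eta_j)-2\epsilon$, and use $\norm{\eta-e_i}_2^2\le 2$ to bound $a+b\le 2\sqrt{2+\epsilon}$. The only cosmetic difference is that the paper runs the argument backward on squared norms (showing $a^2-2\delta_0 a > b^2+2\delta_0 b$), whereas you factor $a-b=(a^2-b^2)/(a+b)$ and argue forward; the content and constants are identical.
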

\begin{proof}
We first consider the general case when $p \neq p_x^*$. Write $p = p_x^* + \delta_0 v$ where $v = \frac{p-p_x^*}{\norm{p-p_x^*}_2}$ and $\delta_0 = \norm{p-p_x^*}_2$. Recall that $p_x^* = G\eta(x)$. Note the inequality immediately implies $\eta_k(x) > \eta_j(x)$.
\begin{align}
    & \frac{(1+\lambda^G)(\eta_k(x)-\eta_j(x))-2\lambda^G}{\sqrt{2-2\lambda^G}} > \delta_0 \\
    \implies & (1-\lambda^G) (\eta_k(x)-\eta_j(x)) - 2\lambda^G (1-(\eta_k(x) - \eta_j(x))) > \delta_0 \sqrt{2-2\lambda^G} \\
    \implies & (1-\lambda^G) (\eta_k(x)-\eta_j(x)) - 2\lambda^G (1-\eta_k(x) - \eta_j(x)) > \delta_0 \sqrt{2-2\lambda^G} \\
    \implies & \sqrt{1-\lambda^G} (\eta_k(x)-\eta_j(x)) - \frac{2\lambda^G}{\sqrt{1-\lambda^G} } (1-\eta_k(x)-\eta_j(x)) > \sqrt{2}\delta_0 \\
    \implies & \sqrt{1-\lambda_{j, k}^{\fRe}} (\eta_k(x)-\eta_j(x)) + \frac{1}{\sqrt{1-\lambda_{j, k}^{\fRe}}} \sum_{i \neq j, k}(\lambda_{i, k}^{\fRe} - \lambda_{i, j}^{\fRe})\eta_i(x) > \sqrt{2}\delta_0 \label{ineq:coh}\\
    \implies & (1-\lambda_{j, k}^{\fRe}) (\eta_k(x)-\eta_j(x)) + \sum_{i \neq j, k}(\lambda_{i, k}^{\fRe} - \lambda_{i, j}^{\fRe})\eta_i(x) > \delta_0 \sqrt{2-2\lambda_{j,k}^{\fRe}} \\
    \implies & \fRe \innerprod{p^*, g_k - g_j} > \delta_0 \norm{g_j-g_k}_2 \label{ineq:last4} \\
    \implies & \fRe \innerprod{p^*, g_k - g_j} > \fRe \innerprod{\delta_0 v, g_j - g_k}  \label{ineq:last3} \\
    \implies &  \fRe \innerprod{p^*+\delta_0 v, g_k} >  
 \fRe \innerprod{p^*+\delta_0 v, g_j} \\
    \implies &  \norm{p - g_j}_2^2 > \norm{p - g_k}_2^2 \label{ineq:last1} \\
\end{align}
Inequality (\ref{ineq:coh}) follows the fact that $\forall i, i', \abs{\lambda_{i, i'}^{\fRe}} \leq \lambda^G$. Inequality (\ref{ineq:last4}) follows from $p_x^* = \sum_{i=1}^C \eta_i(x) g_i$ and $\norm{g_j-g_k}_2 = \sqrt{2-2\lambda_{j,k}^{\fRe}}$. Inequality (\ref{ineq:last3}) is implied by the Cauchy-Schwarz inequality. In the last inequality (\ref{ineq:last1}), we use the fact that $\norm{g_j}_2 = \norm{g_k}_2 = 1$.

Now let $p = p_x^*$. Let $\frac{(1+\lambda^G)(\eta_k(x)-\eta_j(x))-2\lambda^G}{\sqrt{2-2\lambda^G}} > 0$.
\begin{align}
    & \norm{p - g_j}_2^2 - \norm{p - g_k}_2^2 \\
    = & 2 \fRe \innerprod{p_x^*, g_k - g_j} \\
    = & 2 (1-\lambda_{j, k}^{\fRe}) (\eta_k(x)-\eta_j(x)) + 2 \sum_{i \neq j, k}(\lambda_{i, k}^{\fRe} - \lambda_{i, j}^{\fRe})\eta_i(x) \\
    \geq & 2(1-\lambda^G) (\eta_k(x)-\eta_j(x)) - 4\lambda^G (1-\eta_k(x) - \eta_j(x)) \\
    \geq & 2(1-\lambda^G) (\eta_k(x)-\eta_j(x)) - 4\lambda^G (1-(\eta_k(x) - \eta_j(x))) \\
    = & 2(1+\lambda^G) (\eta_k(x)-\eta_j(x)) - 4\lambda^G > 0
\end{align}

\end{proof}

\begin{lemma} \label{lem: cond risk} $\forall x \in \cX$, $\forall r > \frac{2\lambda^G}{1+\lambda^G}$, and $\forall p \in \CC^n$, $C_{2,x}(p) < \frac{\prs{(1+\lambda^G)r-2\lambda^G}^2}{4-2\lambda^G} \implies C_{1,x}(p) < r.$

\end{lemma}
\begin{proof}
By Lemma \ref{lem: c2x}, $C_{2,x}(p) <  \frac{\prs{(1+\lambda^G)r-2\lambda^G}^2}{4-2\lambda^G} \implies \norm{p-p_x^*}_2 <  \frac{(1+\lambda^G)r-2\lambda^G}{\sqrt{2-\lambda^G}}.$ Fix $x\in\cX$. Recall that $\beta^G(p) = \min \cbs{ \argmin_{i\in \cY} \norm{p-g_i}_{2} }$.
We claim 
$$\norm{p-p_x^*} <  \frac{(1+\lambda^G)r-2\lambda^G}{\sqrt{2-\lambda^G}} \implies 
\max_i \eta_i(x) - \eta_{\beta^G(p)}(x) < r.
$$
Assume $\norm{p-p_x^*} < \frac{(1+\lambda^G)r-2\lambda^G}{\sqrt{2-\lambda^G}}$ and $\max_i \eta_i(x) - \eta_{\beta^G(p)}(x) \geq r$. By Lemma \ref{lem: dis preserv}, $\norm{p-g_{\beta^G(p)}} > \norm{p-g_{\min \cbs{\argmax_i \eta_i(x)}}}$, contradicting the definition of $\beta^G(p)$.
Hence, $C_{1,x}(p) = \max_i \eta_i(x) - \eta_{\beta^G(p)}(x) < r$.


\end{proof}

Now we're ready to prove Theorem \ref{thm:main}. 
\begin{proof}[Proof of Theorem \ref{thm:main}]

\begin{align}
    \cR_{L^G, P}(f) - \cR_{L^G, P}^* & = \int_{\cX} C_{1,x}(f(x)) \\
    &= \int_{x: d(x) < r} C_{1,x}(f(x))  + \int_{x:d(x) \geq r} C_{1,x}(f(x)).
\end{align}
We bound each integral individually.

By Lemma \ref{lem: cond risk}, $\forall x \in \cX$, $\forall r > \frac{2\lambda^G}{1+\lambda^G}$, and $\forall p \in \CC^n$,
\begin{equation} \label{ineq: general cond risk}
    C_{1,x}(p) \geq r \implies C_{2,x}(p) \geq \frac{\prs{(1+\lambda^G)r-2\lambda^G}^2}{4-2\lambda^G}.
\end{equation}
Hence, 
\begin{align}
    C_{1,x}(p) > \frac{2\lambda^G}{1+\lambda^G} &\implies C_{2, x}(p) \geq \frac{\prs{(1+\lambda^G)C_{1,x}(p)-2\lambda^G}^2}{4-2\lambda^G}\\
    &\implies C_{1,x}(p)\leq \frac{2\lambda^G}{1+\lambda^G} + \frac{1}{1+\lambda^G}\sqrt{(4-2\lambda^G) C_{2,x}(p)}.
\end{align}
Note the last inequality actually holds for all $p \in \CC^n$, that is, it holds even when $C_{1, x}(p) \leq \frac{2\lambda^G}{1+\lambda^G}$. Then,

\begin{align}
    & \quad \int_{x: d(x) < r} C_{1,x}(f(x)) \\
    & \leq \int_{x: d(x) < r}  \frac{2\lambda^G}{1+\lambda^G} + \frac{1}{1+\lambda^G}\sqrt{(4-2\lambda^G) C_{2,x}(f(x))} \\
    & = \frac{2\lambda^G}{1+\lambda^G} P_{\cX}(d(X) < r) + \frac{\sqrt{4-2\lambda^G}}{1+\lambda^G} \int_{x: d(x) < r} \sqrt{ C_{2,x}(f(x))} \\
    & = \frac{2\lambda^G}{1+\lambda^G} P_{\cX}(d(X) < r) + \frac{\sqrt{4-2\lambda^G}}{1+\lambda^G} \norm{\mathbbm{1}_{d(x) < r} \sqrt{ C_{2,x}(f(x))}}_{P_{\cX}, 1} \\
    & \leq \frac{2\lambda^G}{1+\lambda^G} P_{\cX}(d(X) < r)  +  \frac{\sqrt{4-2\lambda^G}}{1+\lambda^G} \norm{\mathbbm{1}_{d(x) < r}}_{P_{\cX}, 2} \norm{\sqrt{ C_{2,x}(f(x))}}_{P_{\cX}, 2} \label{separate C2x 1} \\
    & = \frac{2\lambda^G}{1+\lambda^G} P_{\cX}(d(X) < r) +  \frac{\sqrt{4-2\lambda^G}}{1+\lambda^G} \sqrt{P_{\cX}(d(X) < r)\prs{\cR_{\ell^G, P}(f) - \cR^*_{\ell^G, P}}}.
\end{align}
In inequality (\ref{separate C2x 1}), we apply Holder's inequality.

When $C_{1,x}(p) > 0$, $C_{1,x}(p) = \max_i \eta_i(x) - \eta_{\beta^G(p)}(x) \geq \max_i \eta_i(x) - \max_{i\notin \argmax_j \eta_j(x)} \eta_i(x) = d(x)$. By (\ref{ineq: general cond risk}), if $d(x) \geq r$ and $C_{1,x}(p) > 0$, then $C_{2,x}(p) \geq \frac{\prs{(1+\lambda^G)r-2\lambda^G}^2}{4-2\lambda^G}$. As $C_{1,x}(p) \in \brs{0, 1}$, $d(x) \geq r$ and $C_{1,x}(p) > 0$ $\implies$ $C_{2,x}(p) \geq \frac{\prs{(1+\lambda^G)r-2\lambda^G}^2}{4-2\lambda^G}C_{1,x}(p)$. It is trivial that $C_{2,x}(p) \geq \frac{\prs{(1+\lambda^G)r-2\lambda^G}^2}{4-2\lambda^G}C_{1,x}(p)$ also holds when $C_{1,x}(p) = 0$. Thus, $\forall x \in \cX$ and $\forall p\in \CC^n$, $d(x) \geq r \implies$ $C_{2,x}(p) \geq \frac{\prs{(1+\lambda^G)r-2\lambda^G}^2}{4-2\lambda^G}C_{1,x}(p) \implies C_{1,x}(p) \leq \frac{4-2\lambda^G}{\prs{(1+\lambda^G)r-2\lambda^G}^2}C_{2,x}(p)$. Therefore,

\begin{align}
    \int_{x: d(x) \geq r} C_{1,x}(f(x)) & \leq \int_{x: d(x) \geq r} \frac{4-2\lambda^G}{\prs{(1+\lambda^G)r-2\lambda^G}^2}C_{2,x}(f(x)) \\
    & \leq \frac{4-2\lambda^G}{\prs{(1+\lambda^G)r-2\lambda^G}^2} \int_{\cX} C_{2,x}(f(x))\\
    & = \frac{4-2\lambda^G}{\prs{(1+\lambda^G)r-2\lambda^G}^2} \prs{\cR_{\ell^G, P}(f) - \cR^*_{\ell^G, P}}.
\end{align}
\end{proof}

\section{Experiment details}
\label{appsec:exp}
In this section, we provide the details of our experiments.

\subsection{Neural networks}
In this section, we provide details on the architectures and hyperparameter choices for the neural networks used in our experiments. The architectures and hyperparameters are selected by trial-and-error on a heldout dataset.

\subsubsection{LSHTC1}
The proposed embedding strategy adopts a 2-layer neural network architecture, employing a hidden layer of 4096 neurons with ReLU activation. The output of the neural network is normalized to have a Euclidean norm of 1. An Adamax optimizer with a learning rate of $0.001$ is utilized together with a batch size of 128 for training. The model is trained for a total of 5 epochs. In order to effectively manage the learning rate, a scheduler is deployed, which scales down the learning rate by a factor of 0.1 at the second epoch.

Our cross-entropy baseline retains a similar network architecture to the embedding strategy, with an adjustment in the output layer to reflect the number of classes. Here, the learning rate is $0.01$ and the batch size is 128 for training. The model undergoes training for a total of 5 epochs, with a scheduler set to decrease the learning rate after the third epoch.

Finally, the squared loss baseline follows the architecture of our cross-entropy baseline, with the learning rate and batch size mirroring the parameters of the embedding strategy. As with the embedding strategy, the output is normalized. The model is trained for a total of 5 epochs, and the learning rate is scheduled to decrease after the third epoch.

\subsubsection{DMOZ}

For the DMOZ dataset, our proposed label embedding strategy employed a 2-layer neural network with a hidden layer of 2500 neurons activated by the ReLU function. The output of the network is normalized to have a norm of 1. We trained the model using the Adamax optimizer with a learning rate of $0.001$ and a batch size of 256. The model was trained for 5 epochs, and the learning rate was scheduled to decrease at the second and fourth epochs by a factor of 0.1.

For the cross-entropy loss baseline, we used the same network architecture with an adjustment in the output layer to match the number of classes. The learning rate was $0.01$ and the batch size was 256. The model underwent training for a total of 5 epochs, with the learning rate decreasing after the third epoch as determined by the scheduler.

Lastly, the squared loss baseline utilized the same architecture, learning rate, and batch size as the proposed label embedding strategy. Similarly, the model's output was normalized. The model was trained for 5 epochs, with the learning rate scheduled to decrease after the third epoch.

\subsubsection{ODP}
For the ODP dataset, the experiments utilized a neural network model composed of 4 layers. The size of the hidden layers progressively increased from $2^{10}$ to $2^{14}$, then decreased to $2^{13}$. Each of these layers employed the ReLU activation function and was followed by batch normalization to promote faster, more stable convergence. The final layer output size corresponded with the embedding dimension for the label embedding strategy and the number of classes for the cross-entropy and squared loss baselines.

In the label embedding framework, the output was normalized to yield a norm of 1. This model was trained using the Adamax optimizer, a learning rate of 0.001, and a batch size of 2048. The training spanned 20 epochs, with a learning rate decrease scheduled after the 10th epoch by a factor of 0.1.

For the cross-entropy loss baseline, the same network architecture was preserved, with an adjustment to the penultimate layer, reduced by half, and the final output layer resized to match the number of classes. This slight modification in the penultimate layer was necessary to accommodate the models within the 48GB GPU memory. Notably, the neural network output was normalized by dividing each output vector by its Euclidean norm before applying the softmax function, a non-standard operation that significantly enhanced performance. This model was trained using a learning rate of 0.01 over 20 epochs, following a similar learning rate schedule.

Finally, the squared loss baseline used the same architecture as the cross-entropy baseline and the same learning rate and batch size as the label embedding model. Here, the output was also normalized. The model underwent training for 20 epochs, with a learning rate decrease scheduled after the 10th epoch.

\subsection{Elastic net}
We aim to solve 
\begin{equation} \label{en}
\min_{W \in \RR^{D \times n}} \norm{XW - Y}_{fro}^2  + \lambda_1 \norm{W}_{1, 1} + \lambda_2 \norm{W}_{fro}^2,
\end{equation}
where $X \in \RR^{N\times D}$ is the data matrix, the rows of $Y \in \RR^{N\times n}$ are embedding vectors. 

The problem (\ref{en}) can be broken down into $n$ independent real-output regression problems of the form
\begin{equation}
    \min_{W_j \in \RR^{D}} \norm{X W_j - Y_j}_{fro}^2  + \lambda_1 \norm{W_j}_{1} + \lambda_2 \norm{W_j}_{2}^2,
\end{equation}
where $W_j$ is the $j$-th column of $W$ and $Y_j$ is the $j$-th column of $Y$. Consequently, We can distribute the $n$ real-output regression problems across multiple cores. 

We develop a regression variant of the Fully-Corrective Block-Coordinate Frank-Wolfe (FC-BCFW) algorithm \citep{pdsparse} and use it to solve the real-output regression problems. As the solver operates iteratively, we set it to run for a predefined number of iterations, denoted as $N_{\text{iter}}$. The chosen hyperparameters are outlined in table \ref{tab:en hyperparameters}.

\begin{table}[h]
\centering
\begin{tabular}{lccc}
\hline
Dataset & $\lambda_1$ & $\lambda_2$ & $N_{\text{iter}}$ \\
\hline
LSHTC1 & 0.1 & 1 & 20 \\
DMOZ & 0.01 & 0.01 & 20 \\
ODP & 0.01 & 0.1 & 40 \\
\hline
\end{tabular}
\caption{Hyperparameters for elastic net.}
\label{tab:en hyperparameters}
\end{table}

\subsection{Practical issues}


\subsubsection{Choice of embedding dimension}
In practical settings, the choice of the embedding dimension $n$ depends on available computational resources. Under a fixed computational budget, our theory recommends opting for an embedding with minimal coherence. When the type of embedding is fixed but computational resources are flexible, we advise treating the embedding dimension $n$ as a tunable parameter. Specifically, it is beneficial to incrementally increase $n$, perhaps exponentially, and observe the performance. This process should continue until the improvement in performance plateaus or the additional computational cost becomes prohibitively high.

\subsubsection{Types of embedding matrices}
In our experiments, we used four types of embeddings. Random embeddings stand out for their simplicity and flexibility. They can be generated with a single line of code, and the embedding dimension can be any positive integer. However, their drawbacks include the need for explicit storage and potentially higher coherence compared to some deterministic matrices. On the other hand, deterministic matrices like those constructed using Nelson's method can achieve lower coherence, generally leading to better performance. They do not require explicit storage, as individual columns can be generated on demand. The trade-off is that they are more complex to implement, and the options for embedding dimensions are more limited ($e.g.$, Nelson's construction requires prime numbers for the embedding dimension).

\subsection{Used assets}
We list the existing code used in our experiments.
\begin{itemize}
    \item PD-Sparse \citep{pdsparse}: \url{https://github.com/a061105/ExtremeMulticlass} (BSD-3-Clause license).
    \item PPD-Sparse \citep{yen17ppdsparse}: {\url{https://github.com/a061105/AsyncPDSparse}}.
    \item Parabel \citep{prabhu2018parabel}: \url{http://manikvarma.org/code/Parabel/download.html}.
    \item AnnexML \citep{tagami17annexml}: \url{https://github.com/yahoojapan/AnnexML?tab=Apache-2.0-1-ov-file}(Apache-2.0 license).
    \item WLSTS\citep{evron18wltls}: {\url{https://github.com/ievron/wltls/?tab=MIT-1-ov-file}(MIT License)}.
\end{itemize}

\subsection{Visualization of Embeddings}

\begin{figure}[htbp]
  \centering

  \begin{subfigure}[b]{0.45\textwidth}
    \includegraphics[width=\textwidth]{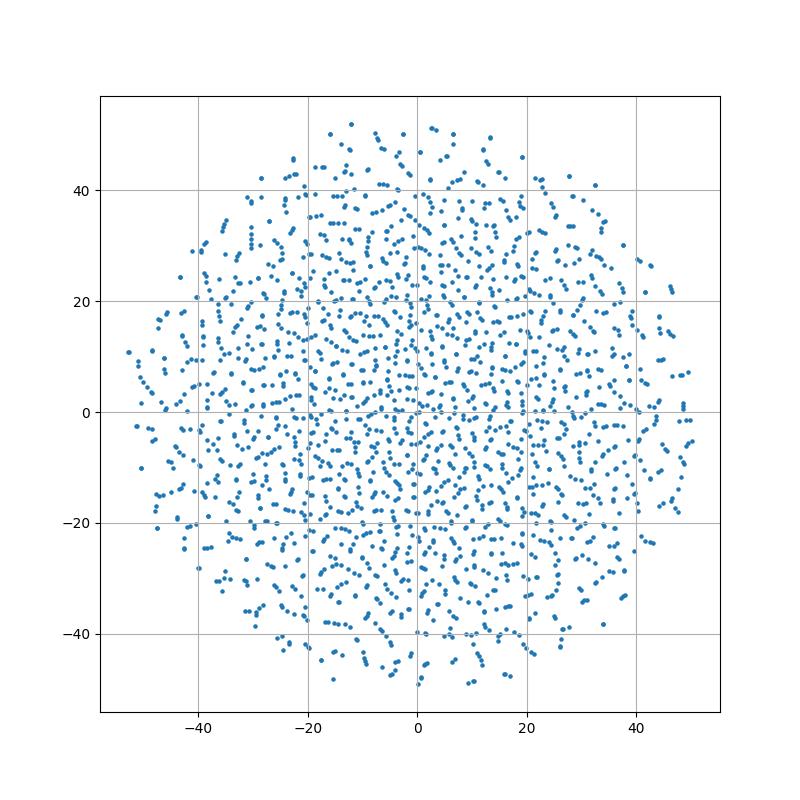}
    \caption{t-SNE plots of 2,048 Rademacher embedding vectors in $\mathbb{R}^{128}$.}
  \end{subfigure}
  \hfill
  \begin{subfigure}[b]{0.45\textwidth}
    \includegraphics[width=\textwidth]{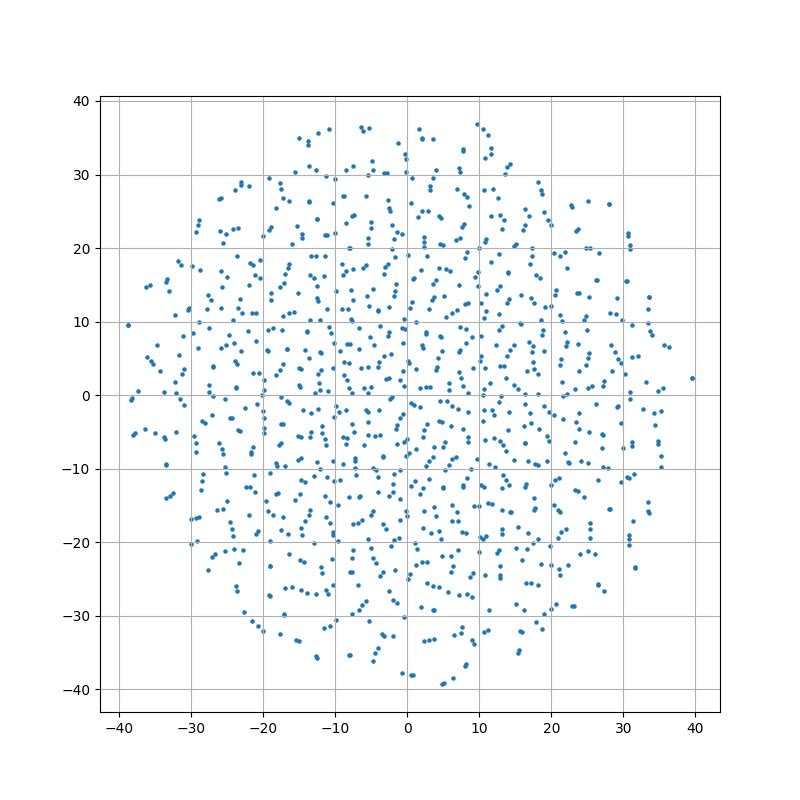}
    \caption{t-SNE plots of 2,048 Gaussian embedding vectors in $\mathbb{R}^{128}$.}
  \end{subfigure}

  \vspace{0.5cm}

  \begin{subfigure}[b]{0.45\textwidth}
    \includegraphics[width=\textwidth]{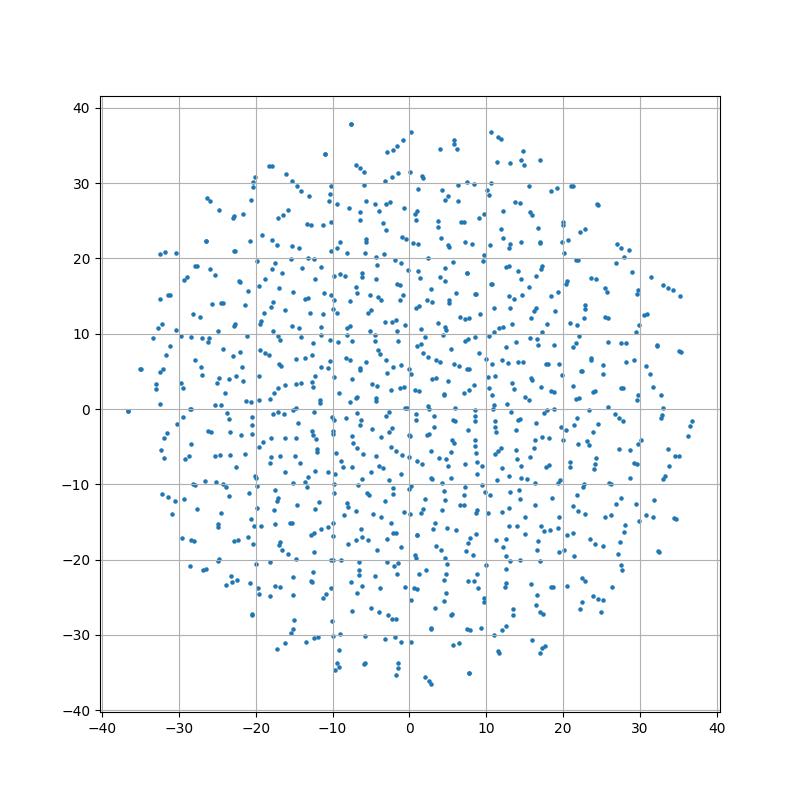}
    \caption{t-SNE plots of 2,048 complex Guassian embedding vectors in $\mathbb{R}^{128}$.}
  \end{subfigure}
  \hfill
  \begin{subfigure}[b]{0.45\textwidth}
    \includegraphics[width=\textwidth]{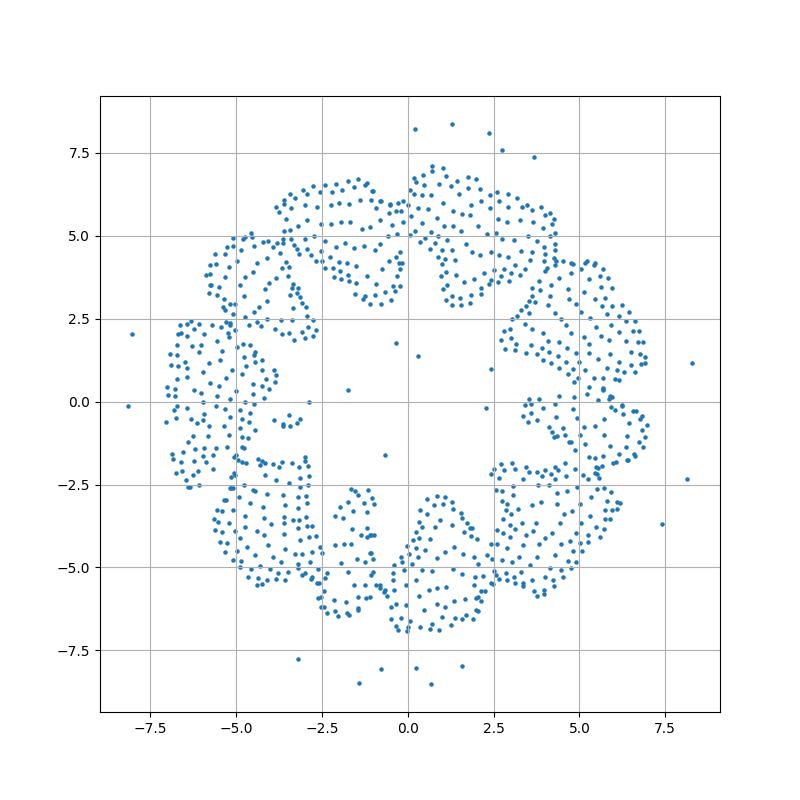}
    \caption{t-SNE plots of 2,048 Nelson's embedding vectors in $\mathbb{R}^{127}$.}
  \end{subfigure}
  \caption{t-SNE plots illustrating the distribution of vectors from four embedding types.}
  \label{fig:tSNE}
\end{figure}

To better understand the structural differences among various embedding types, we visualize 2,048 embedding vectors from each method using t-SNE, as shown in Figure~\ref{fig:tSNE}. The embeddings include Rademacher, Gaussian, complex Gaussian, and Nelson’s construction. The random embeddings all exhibit roughly isotropic spreads in the low-dimensional projection. In contrast, the Nelson embeddings display a more organized and geometrically constrained distribution.

\subsection{Precision and recall}
\begin{figure*}[htbp]
\centering
\begin{tikzpicture}
\begin{groupplot}[
    group style={
        group size=4 by 1,
        horizontal sep=1.2cm,
        group name=my plots
    },
    width=4.5cm,
    height=4.5cm,
    grid=both,
    axis lines=middle,
    minor tick num=5,
    enlargelimits={abs=0.05},
    xlabel=Coherence,
    ylabel style={font=\tiny},
    ylabel=Score,
    x dir=reverse,
    xmin=0.03, xmax=1,
    ymin=0.00, ymax=0.4,
    label style={font=\tiny},
    tick label style={font=\tiny},
    xlabel style={at={(axis description cs:0.5,-0.1)},anchor=north, font=\tiny},
    title style={font=\tiny},
    legend style={
        at={(1.75,-0.35)},
        anchor=north,
        legend columns=4,
        /tikz/every even column/.append style={column sep=0.4cm},
        font=\small
    },
    xmode=log
]

\nextgroupplot[title=Macro Precision]
\addplot[color=red, dash pattern=on 3pt off 3pt, mark=square*] coordinates {
    (0.9125, 0.0661)
    (0.6875, 0.0952)
    (0.5125, 0.1163)
    (0.359375, 0.1301)
    (0.253906, 0.1327)
};
\addlegendentry{Rademacher}

\addplot[color=purple, dash pattern=on 3pt off 1pt, mark=triangle*] coordinates {
    (0.0941, 0.1278)
    (0.0887, 0.1297)
    (0.0631, 0.1302)
    (0.0443, 0.1353)
};
\addlegendentry{Nelson}

\nextgroupplot[title=Micro Precision]
\addplot[color=red, solid, mark=square*] coordinates {
    (0.9125, 0.1822)
    (0.6875, 0.2464)
    (0.5125, 0.2844)
    (0.359375, 0.3016)
    (0.253906, 0.3074)
};

\addplot[color=purple, solid, mark=triangle*] coordinates {
    (0.0941, 0.2972)
    (0.0887, 0.3008)
    (0.0631, 0.3048)
    (0.0443, 0.3098)
};

\nextgroupplot[title=Macro Recall]
\addplot[color=red, dotted, mark=square*] coordinates {
    (0.9125, 0.0599)
    (0.6875, 0.0984)
    (0.5125, 0.1302)
    (0.359375, 0.1494)
    (0.253906, 0.1563)
};

\addplot[color=purple, dotted, mark=triangle*] coordinates {
    (0.0941, 0.1474)
    (0.0887, 0.1483)
    (0.0631, 0.1532)
    (0.0443, 0.1591)
};

\nextgroupplot[title=Micro Recall]
\addplot[color=red!50!black, dashed, mark=square*] coordinates {
    (0.9125, 0.1822)
    (0.6875, 0.2464)
    (0.5125, 0.2844)
    (0.359375, 0.3016)
    (0.253906, 0.3074)
};

\addplot[color=purple!50!black, dashed, mark=triangle*] coordinates {
    (0.0941, 0.2972)
    (0.0887, 0.3008)
    (0.0631, 0.3048)
    (0.0443, 0.3098)
};

\end{groupplot}
\end{tikzpicture}
\caption{Coherence vs metric score on LSHTC1 comparing Rademacher and Nelson embeddings across precision and recall (macro and micro).}
\label{fig:lshtc1_four_metrics}
\end{figure*}

 For a set of $C$ classes, let $\text{TP}_c, \text{FP}_c, \text{FN}_c$ denote true positives, false positives, and false negatives for class $c$. Then
\[
\text{Precision}_c = \frac{\text{TP}_c}{\text{TP}_c + \text{FP}_c}, \quad
\text{Recall}_c = \frac{\text{TP}_c}{\text{TP}_c + \text{FN}_c}.
\]
Macro-averages compute the unweighted mean across classes,
\[
\text{Macro-Precision} = \frac{1}{C}\sum_{c=1}^C \text{Precision}_c, \quad
\text{Macro-Recall} = \frac{1}{C}\sum_{c=1}^C \text{Recall}_c,
\]
while micro-averages pool counts across classes before computing,
\[
\text{Micro-Precision} = \frac{\sum_{c=1}^C \text{TP}_c}{\sum_{c=1}^C (\text{TP}_c + \text{FP}_c)}, \quad
\text{Micro-Recall} = \frac{\sum_{c=1}^C \text{TP}_c}{\sum_{c=1}^C (\text{TP}_c + \text{FN}_c)}.
\]

To assess whether the benefits of low coherence extend beyond classification accuracy, we include additional plots showing four other performance metrics: macro-precision, macro-recall, micro-precision, and micro-recall. Figure \ref{fig:lshtc1_four_metrics} shows how these metrics vary with the coherence of the embedding matrix for LSHTC1 using Rademacher and Nelson constructions. Consistent with the accuracy trends reported in the main paper, we observe a clear inverse relationship between coherence and all measures. As coherence decreases, moving rightward along the reversed x-axis, all four scores consistently improve.


\section{Extended discussion}
In this section, we provide supplementary insights and theoretical context that extend the main exposition.

\subsection{Consistent loss functions}
Let $(X,Y)\sim P$ with $Y\in\{1,\dots,K\}$ and a function $f:\mathcal{X}\!\to\!\mathbb{R}^{K}$. For a surrogate $\phi:\mathbb{R}^{K}\!\times\!\{1,\dots,K\}\!\to\!\mathbb{R}$ define the risk $R_{\phi}(f)=\mathbb{E}\bigl[\phi\bigl(f(X),Y\bigr)\bigr]$.
We say $\phi$ is \emph{(Fisher) consistent} w.r.t.\ the $0$–$1$ loss $\ell$ if every minimiser
$f^{\star}\!\in\!\arg\min_{f}R_{\phi}(f)$ induces a Bayes–optimal classifier
$g^{\star}(x)=\arg\max_{k}f^{\star}_{k}(x)$, i.e.\
$\inf_{f}R_{\phi}(f)=R_{\phi}(f^{\star})\!\Rightarrow\!\inf_{g}R_{\ell}(g)=R_{\ell}(g^{\star})$~\citep{zhang04,tewari2007consistency,Steinwart2007HowTC}.
Canonical consistent surrogates include
the mean‑squared error
$\phi_{\mathrm{MSE}}(f,y)=\lVert e_{y}-f\rVert_{2}^{2}$
and the multinomial logistic (cross‑entropy) loss
$\phi_{\log}(f,y)=-\log\!\frac{\exp(f_{y})}{\sum_{k}\exp(f_{k})}$.
In contrast, margin‑based hinge variants such as
$\phi_{\mathrm{hinge}}(f,y)=\max_{j\neq y}\!\bigl\{1+f_{j}-f_{y}\bigr\}_{+}$
are known to be \emph{inconsistent} in the multiclass setting~\citep{tewari2005convex}.

\subsection{Nelson's construction}
Let $r \geq 2$ be a natural number and $p > r$ be a prime number. Define an index list $a \coloneqq (a_1, \dots, a_r)$ where the $a_i$'s are integers in $[1, n]$. For each $a$, we define
\begin{equation}
    F(a, u) \coloneqq \sum_{j=1}^r a_j u^j.
\end{equation}
The $n$ dimension embedding is constructed as
\begin{equation}
    g^a \coloneqq \frac{1}{\sqrt{n}}\prs{\exp\prs{\frac{2\pi i}{n} F(a, 1)}, \dots, \exp\prs{\frac{2\pi i}{n} F(a, n)}}^T .
\end{equation}
Given $n$ and $r$, each embedding vector is uniquely determined by $a$, yielding an embedding capacity of  $n^r$. The magnitude of the inner product between $g^a$ and $g^{a'}$ is bounded by $\frac{r-1}{n}$ when $a\neq a'$ \citep{NELSON20111238}.

\subsection{Dynamically growing label space}
In scenarios with dynamically growing label spaces, random embedding matrices present no significant challenges: as new labels arrive, we can simply sample their embeddings independently and identically distributed (i.i.d.). For deterministic constructions, the capacity depends on the construction algorithm of the embedding matrices. In Nelson’s framework, the number of available low-coherence embeddings scales as $n^r$, allowing expansion to a polynomially large number of labels while maintaining desirable geometric properties. However, once the $(n^r-1)$ cap is reached, the labels have to be `rehashed'.

\end{document}